\newtheorem{theorem}[exx]{Theorem}
\newtheorem{lemma}[exx]{Lemma}
\newtheorem{proposition}[exx]{Proposition}
\newtheorem{corollary}[exx]{Corollary}
\theoremstyle{definition}
\newtheorem{definition}[exx]{Definition}
\newtheorem{example}[exx]{Example}
\newtheorem{remark}[exx]{Remark}
\renewcommand{\rb}{\mathalpha{\ltimes}}
\renewcommand{\lb}{\mathalpha{\rtimes}}
\newcommand{\Empty}{\lambda}
\newcommand{\A}[1]{\mathcal{A}(#1)}
\newcommand{\T}[1]{\mathcal{T}(#1)}
\DeclareMathOperator{\lcp}{lcp}
\DeclareMathOperator{\suff}{suff}
\DeclareMathOperator{\rs}{\rho}
\title{Action-Sensitive Phonological Dependencies}
\author{Yiding Hao \\
	Yale University \\
	New Haven, CT, USA \\
	\texttt{yiding.hao@yale.edu} \\\And
	Dustin Bowers \\
	University of Arizona \\
	Tucson, AZ, USA \\
	\texttt{bowersd@email.arizona.edu}}
\date{}
\begin{document}
	\maketitle
	\begin{abstract}
		This paper defines a subregular class of functions called the \textit{tier-based synchronized strictly local} (TSSL) functions. These functions are similar to the the tier-based input--output strictly local (TIOSL) functions, except that the locality condition is enforced not on the input and output streams, but on the computation history of the minimal subsequential finite-state transducer. We show that TSSL functions naturally describe rhythmic syncope while TIOSL functions cannot, and we argue that TSSL functions provide a more restricted characterization of rhythmic syncope than existing treatments within Optimality Theory.
	\end{abstract}
	
	\section{Introduction}
	
	The subregular program in phonology seeks to define subclasses of the regular languages and finite-state functions that describe attested phonotactic constraints and phonological processes. These subclasses provide a natural framework for typological classification of linguistic phenomena while allowing for the development of precise theories of language learning and processing. The traditional view in subregular phonology is that most phonotactic dependencies are described by \textit{tier-based strictly local} languages (TSL, \citealp{heinzTierbasedStrictlyLocal2011,mcmullinLongDistancePhonotacticsTierBased2016,mcmullinTierBasedLocalityLongDistance2016}), while most phonological process are described by \textit{strictly local} functions \citep{chandleeStrictlyLocalPhonological2014,chandleeOutputStrictlyLocal2015,chandleeInputOutputStrictlyInprep}. These classes of languages and functions are defined by a principle known as \textit{locality}---that dependencies between symbols must occur over a bounded distance within the string. To account for longer-distance dependencies, \citet{heinzTierbasedStrictlyLocal2011} proposes a \textit{tier projection} mechanism that allows irrelevant intervening symbols to be exempt from the locality condition.  
	
	Recent work in subregular phonology has identified a number of exceptions to the traditional view. On the language side, unbounded culminative stress systems \citep{baekComputationalRepresentationUnbounded2018}, Uyghur backness harmony \citep{mayerChallengeTierBasedStrict2018}, and Sanskrit n-retroflexion \citep{grafSanskritNRetroflexionInputOutput2018} have been shown to lie outside the class of TSL languages. These observations have led to an enhancement of \citeauthor{heinzTierbasedStrictlyLocal2011}'s (\citeyear{heinzTierbasedStrictlyLocal2011}) tier projection system. On the function side, a number of processes, including bidirectional harmony systems \citep{heinzVowelHarmonySubsequentiality2013} and certain tonal processes \citep{jardineComputationallyToneDifferent2016}, have been shown to be not subsequential, and therefore not strictly local. At least two proposals, both known as the \textit{weakly deterministic} functions, have been made in order to capture these processes \citep{heinzVowelHarmonySubsequentiality2013,mccollumExpressivitySegmentalPhonology2018}.
	
	This paper identifies \textit{rhythmic syncope} as an additional example of a phonological process that is not strictly local. In rhythmic syncope, every second vowel of an underlying form is deleted in the surface form, starting with either the first or the second vowel. While rhythmic syncope cannot be expressed as a local dependency between symbols, it can be viewed as a local dependency between \textit{actions} in the computation history of the minimal subsequential finite-state transducer (SFST). We formalize such dependencies by proposing the \textit{tier-based synchronized strictly local} functions (TSSL). See \citet{bowersHaoInformalSSLToAppear} for a discussion of TSSL functions oriented towards the phonological literature.
	
	This paper is structured as follows. Section \ref{sec:sec2} enumerates standard definitions and notation used throughout the paper, while Section \ref{sec:sec3} reviews existing work on strictly local functions. Section \ref{sec:sec4} introduces rhythmic syncope and shows that it is not strictly local. Section \ref{sec:sec5} presents two equivalent definitions of the TSSL functions---an algebraic definition and a definition in terms of a canonical SFST. Section \ref{sec:sec6} develops some formal properties of the TSSL functions, showing that they are incomparable to the full class strictly local functions. Section \ref{sec:sec7} compares our proposal to existing OT treatments of rhythmic syncope, and Section \ref{sec:sec8} concludes.
	
	\section{Preliminaries}
	\label{sec:sec2}
	
	As usual, $\mathbb{N}$ denotes the set of nonnegative integers. $\Sigma$ and $\Gamma$ denote finite alphabets not including the left and right word boundary symbols $\lb$ and $\rb$, respectively. The length of a string $x$ is denoted by $|x|$, and $\Empty$ denotes the empty string. Alphabet symbols are identified with strings of length $1$, and individual strings are identified with singleton sets of strings. For $k \in \mathbb{N}$, $\alpha^k$ denotes $\alpha$ concatenated with itself $k$-many times, $\alpha^{< k}$ denotes $\bigcup_{i = 0}^{k - 1} \alpha^i$, $\alpha^*$ denotes $\bigcup_{i = 0}^\infty \alpha^i$, and $\alpha^+$ denotes $\alpha\alpha^*$. The \textit{longest common prefix} of a set of strings $A$ is the longest string $\lcp(A)$ such that every string in $A$ begins with $\lcp(A)$. The \textit{$k$-suffix} of a string $x$, denoted $\suff^k(x)$, is the string consisting of the last $k$-many symbols of $\lb^k x$.  
	
	A \textit{subsequential finite-state transducer} (SFST) is a 6-tuple $T = \langle Q, \Sigma, \Gamma, q_0, \mathalpha{\to}, \sigma \rangle$, where
	\begin{itemize}
		\item $Q$ is the set of \textit{states}, with $q_0 \in Q$ being the \textit{start state};
		\item $\Sigma$ and $\Gamma$ are the \textit{input} and \textit{output alphabets}, respectively;
		\item $\mathalpha{\to}: Q \times \Sigma \to Q \times \Gamma^*$ is the \textit{transition function}; and
		\item $\sigma:Q \to \Gamma^*$ is the \textit{final output function}.
	\end{itemize}
	For $x \in \Sigma^*$; $y \in \Gamma^*$; and $q, r \in Q$, the notation $q \xrightarrow{x:y} r$ means that $T$ emits $y$ to the output stream and transitions to state $r$ if it reads $x$ in the input stream while it is in state $q$. Letting $f:\Sigma^* \to \Gamma^*$, we say that \textit{$T$ computes $f$} if for every $x \in \Sigma^*$, $f(x) = y\sigma(q)$, where $q_0 \xrightarrow{x:y} q$. A function is \textit{subsequential} if it is computed by an SFST.

	An SFST $T = \langle Q, \Sigma, \Gamma, q_0, \to, \sigma \rangle$ is \textit{onward} if for every state $q$ other than $q_0$, 
	\[
	\lcp\left(\left\lbrace y \middle|\exists x \exists r.q \xrightarrow{x:y} r \right\rbrace \cup \lbrace \sigma(q) \rbrace\right) = \Empty.
	\]
	Putting $T$ in onward form allows us to impose structure on the timing with which SFSTs produce output symbols. 
	\begin{definition}
		Let $f:\Sigma^* \to \Gamma^*$. We define the function $f^\gets:\Sigma^* \to \Gamma^*$ by
		\[
		f^\gets(x) := \lcp\left(\left\lbrace f(xy) \middle| y \in \Sigma^* \right\rbrace \right).
		\] 
		For any $x, y \in \Sigma^*$,  $f^\to_x(y)$ denotes the string such that $f(xy) = f^\gets(x)f^\to_x(y)$. We refer to $f^\to_x$ as the \textit{translation of $f$ by $x$} and to $f^\gets$ as \textit{$f$ top}.\footnote{This terminology follows \citet[pp. 692--693]{sakarovitchElementsAutomataTheory2009}. In the transducer inference literature, \citet{oncinaLearningSubsequentialTransducers1993} refer to $f_x^\to$ as the \textit{tails of $x$ in $f$}, and \citet{chandleeOutputStrictlyLocal2015} refer to $f^\gets$ as the \textit{prefix function associated to $f$}.}
	\end{definition}
	Suppose $T$ computes $f$. The following facts are apparent.
	\begin{itemize}
		\item Fix $w, x \in \Sigma^*$ and write $q_0 \xrightarrow{x:y} q$ and $q_0 \xrightarrow{x:z} r$. If $q = r$, then $f_w^\to = f_y^\to$.
		\item $T$ is onward if and only if for all $q \in Q \backslash \lbrace q_0 \rbrace$, if $q_0 \xrightarrow{x:y} q$, then $y = f^\gets(x)$.
	\end{itemize}
	These observations allow us to construct the \textit{minimal SFST for $f$} by identifying each state with a possible translation $f_x^\to$ \citep{raneySequentialFunctions1958}.
	
	Let $A$ and $B$ be alphabets that are possibly infinite. A function $h:A^* \to B^*$ is a \textit{homomorphism} if for every $x, y \in A^*$, $h(xy) = h(x)h(y)$.

	\section{Background}
	\label{sec:sec3}
	
	The \textit{strictly local functions} are classes of subsequential functions proposed by \citet{chandleeStrictlyLocalPhonological2014}, \citet{chandleeOutputStrictlyLocal2015}, and \citet{chandleeInputOutputStrictlyInprep} as transductive analogues of the strictly local languages \citep{mcnaughtonCounterFreeAutomata1971}. Whereas phonotactic dependencies can usually be described using \textit{tier-based strictly local} languages \citep{heinzTierbasedStrictlyLocal2011,mcmullinLongDistancePhonotacticsTierBased2016,mcmullinTierBasedLocalityLongDistance2016}, \citet{chandleeStrictlyLocalPhonological2014} has argued that local phonological processes can be modelled as strictly local functions when they are viewed as mappings between underlying representations and surface representations. A survey overview of the related literature can be found in \citet{heinzComputationalNaturePhonological2018}.
	
	Intuitively, strictly local functions are functions computed by SFSTs in which each state represents the $i - 1$ most recent symbols in the input stream and the $j - 1$ most recent symbols in the output stream along with the current input symbol, for some parameter values $i, j$ fixed. Such functions are ``local'' in the sense that the action performed on each input symbol depends only on information about symbols in the input and output streams within a bounded distance. In this paper, we augment strictly local functions with \textit{tier projection}, a mechanism introduced by \citet{heinzTierbasedStrictlyLocal2011} and elaborated by \citet{baekComputationalRepresentationUnbounded2018}, \citet{mayerChallengeTierBasedStrict2018}, and \citet{grafSanskritNRetroflexionInputOutput2018} that allows the locality constraint to bypass irrelevant alphabet symbols, extending the distance over which dependencies may be enforced. 
	\begin{definition}
		For any alphabet $\Sigma$, a \textit{tier on $\Sigma$} is a homomorphism $\tau:\Sigma^* \to \Sigma^*$ such that for each $a \in \Sigma$, either $\tau(a) = a$ or $\tau(a) = \Empty$. In the former case, we say that $a$ is \textit{on $\tau$}; in the latter case, we say that $a$ is \textit{off $\tau$}.
	\end{definition}

	\citet{chandleeStrictlyLocalPhonological2014}, \citet{chandleeOutputStrictlyLocal2015}, and \citet{chandleeInputOutputStrictlyInprep} give two definitions of the strictly local functions. Firstly, they state the locality condition in terms of the algebraic representation of minimal SFSTs.
	\begin{definition}
		\label{defn:tioslalg}
		Fix $i, j > 0$ and let $\tau$ be a tier on $\Sigma \cup \Gamma$. A function $f:\Sigma^* \to \Gamma^*$ is \textit{$i, j$-input--output strictly local on tier $\tau$} ($i, j$-TIOSL) if for all $w, x \in \Sigma^*$, if
		\begin{itemize}
			\item $\suff^{i - 1}(\tau(w)) = \suff^{i - 1}(\tau(x))$ and
			\item $\suff^{j - 1}(\tau(f^\gets(w))) = \suff^{j - 1}(\tau(f^\gets(x)))$,
		\end{itemize}
		then $f^\to_w = f^\to_x$. A function is \textit{$i$-input strictly local on tier $\tau$} ($i$-TISL) if it is $i, 1$-TIOSL on tier $\tau$, and it is \textit{$j$-output strictly local on tier $\tau$} ($j$-TOSL) if it is $1, j$-TIOSL on tier $\tau$.
	\end{definition}

	Secondly, they define strictly local functions in terms of canonical SFSTs that directly encode $(i - 1)$-suffixes of the input stream and $(j - 1)$-suffixes of the output stream in their state names.

	\begin{definition}
		Fix $i, j > 0$ and let $\tau$ be a tier on $\Sigma \cup \Gamma$. An SFST $T = \langle Q, \Sigma, \Gamma, q_0, \to, \sigma \rangle$ is \textit{$i, j$-input--output strictly local on tier $\tau$} ($i, j$-TIOSL) if the following conditions hold.
		\begin{itemize}
			\item $Q = \left( \lbrace \lb \rbrace \cup \Sigma \right)^{i - 1} \times  \left( \lbrace \lb \rbrace \cup \Gamma \right)^{j - 1}$ and $q_0 = \left\langle \lb^{i - 1}, \lb^{j - 1} \right\rangle$.
			
			
			\item If $\langle a, b \rangle \xrightarrow{x:y} \langle c, d \rangle$, then $c = \suff^{i - 1}(\tau(ax))$ and $d = \suff^{j - 1}(\tau(by))$.
		\end{itemize}
		An SFST is \textit{$i$-input strictly local on tier $\tau$} ($i$-TISL) if it is $i, 1$-TIOSL on tier $\tau$, and it is \textit{$j$-output strictly local on tier $\tau$} ($j$-TOSL) if it is $1, j$-TIOSL on tier $\tau$.
	\end{definition}

	These definitions turn out to be equivalent when the canonical SFSTs are required to be onward.
	
	\begin{theorem}[\citealp{chandleeStrictlyLocalPhonological2014,chandleeOutputStrictlyLocal2015,chandleeInputOutputStrictlyInprep}]
		A function is $i, j$-TIOSL on tier $\tau$ if and only if it is computed by an onward SFST that is $i, j$-TIOSL on tier $\tau$.
	\end{theorem}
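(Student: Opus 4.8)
The plan is to prove the two implications separately; the right-to-left direction is the routine one.

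For ($\Leftarrow$), suppose $T = \langle Q, \Sigma, \Gamma, q_0, \to, \sigma\rangle$ is onward, computes $f$, and is $i,j$-TIOSL on tier $\tau$. Given $w, x$ with $\suff^{i-1}(\tau(w)) = \suff^{i-1}(\tau(x))$ and $\suff^{j-1}(\tau(f^\gets(w))) = \suff^{j-1}(\tau(f^\gets(x)))$, write $q_0 \xrightarrow{w:y_w} q_w$ and $q_0 \xrightarrow{x:y_x} q_x$. Iterating the state-name condition along each run shows that the coordinates of $q_w$ are $\suff^{i-1}(\tau(w))$ and $\suff^{j-1}(\tau(y_w))$ (under the convention that $\tau$ and $\suff^k$ fix the boundary symbols occurring in state names), and likewise for $q_x$. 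By the second of the two facts recorded after the definition of $f^\gets$, onwardness gives $y_w = f^\gets(w)$ and $y_x = f^\gets(x)$ whenever $q_w, q_x \neq q_0$, so the two hypotheses force $q_w = q_x$; the first of those facts then yields $f^\to_w = f^\to_x$. The degenerate case in which $q_w$ or $q_x$ equals $q_0$ is disposed of directly, the relevant coordinate then being entirely $\lb$-padded.

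For ($\Rightarrow$), suppose $f$ is $i,j$-TIOSL on tier $\tau$ in the algebraic sense. The content of the hypothesis is that $f^\to_x$ depends only on the pair $\pi(x) := \langle \suff^{i-1}(\tau(x)), \suff^{j-1}(\tau(f^\gets(x)))\rangle$, which is precisely a state of the canonical SFST. So define $T$ on the stipulated state set by: from a state $q = \pi(x)$, the transition on $c$ emits the string $u$ with $f^\gets(xc) = f^\gets(x)u$ (equivalently $u = \lcp\{f^\to_x(cz) \mid z \in \Sigma^*\}$; this is well-defined because $f^\gets(x)$ is always a prefix of $f^\gets(xc)$, and the choice of $x$ is immaterial by the hypothesis), moving to the state forced by the state-name condition, which one checks to be $\pi(xc)$; and $\sigma(q) := f^\to_x(\Empty)$ (the treatment of $q_0$ requires a small adjustment, discussed below). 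States not of the form $\pi(x)$ play no role in computing $f$ and may be assigned empty transition outputs and final outputs, which trivially satisfies onwardness. One then checks by induction on $|x|$ that the run on $x$ reaches $\pi(x)$ having emitted $f^\gets(x)$, whence $f(x) = f^\gets(x)\sigma(\pi(x)) = f^\gets(x)f^\to_x(\Empty) = f(x)$, so $T$ computes $f$; and that $T$ is onward, since at a reachable non-start state $\pi(x)$ the longest common prefix of the outgoing transition outputs together with $\sigma(\pi(x))$ is empty, as $\lcp\{f(xu) \mid u \in \Sigma^*\} = f^\gets(x)$ by the definition of $f^\gets$. The state-name condition holds by construction.

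The one step that demands genuine care — and the only place where the two definitions do not simply unwind into each other — is the start state $q_0 = \langle \lb^{i-1}, \lb^{j-1}\rangle$: it carries no initial-output register and is exempt from the onwardness requirement, so the initial segment $f^\gets(\Empty)$ of the output cannot be deferred the way the minimal SFST would defer it but must be produced on the first transition out of $q_0$. One must then confirm that emitting it there is consistent with the state-name condition (via $\suff^{j-1}(\tau(\lb^{j-1}f^\gets(a))) = \suff^{j-1}(\tau(f^\gets(a)))$) and with the possibility that this transition, or a later one, re-enters $q_0$ — which happens exactly when an input symbol and the output increment it triggers are both off the tier. Pinning down this bookkeeping, together with fixing at the outset the convention by which $\tau$ and $\suff^k$ act on state names, is where the argument's real work lies; everything else is a mechanical unwinding of the definitions.
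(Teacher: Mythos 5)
The paper does not actually prove this theorem---it is imported by citation from Chandlee et al.---but your argument is correct and is essentially the same two-lemma architecture the paper itself deploys in Appendix~A for the analogous TSSL equivalence (Theorem~16): a canonical construction for the forward direction, with correctness resting on an induction showing that the state reached after reading $x$ is the pair of tier-suffixes of $x$ and of the emitted output (which onwardness identifies with $f^\gets(x)$ via the telescoping identity $\suff^{k}(\suff^{k}(u)v) = \suff^{k}(uv)$), and the converse obtained from the same state-tracking lemma applied to an arbitrary onward canonical machine. The $q_0$ bookkeeping you flag---the convention that $\tau$ and $\suff$ absorb $\lb$, the forced emission of $f^\gets(\Empty)$ on the first transition, and runs that re-enter $q_0$---is indeed the only delicate point; note that the paper's own appendix construction resolves it by stipulating $\sigma(q_0) := f(\Empty)$ while exempting $q_0$ from onwardness, and leaves unexamined exactly the residue you identify (a nonempty $f^\gets(\Empty)$ combined with a re-enterable start state), so closing that case explicitly would put your write-up on firmer ground than the template it parallels.
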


	\begin{figure}
		\begin{center}
			\begin{tikzpicture}[>=stealth',shorten >=1pt,auto,node distance=2cm]
			
			\node [initial, state, accepting] (q2) {$\lb$};	
			\node [state, accepting] (q0) [right of=q2] {\textipa{@}};
			\node [state, accepting] (q1) [right of=q0] {V};
			
			\path [->]
			
			(q2) edge [loop above] node {$C:C$} (q2)
			(q2) edge node {$V:\text{\textipa{@}}$} (q0)
			(q0) edge [loop above] node {$C:C$} (q0)
			(q0) edge [bend left] node {$V:V$} (q1)
			(q1) edge [loop above] node {$C:C$} (q1)
			(q1) edge [bend left] node {$V:\text{\textipa{@}}$} (q0);
			
			\end{tikzpicture}
		\end{center}
		\caption{An SFST for rhythmic reduction.}
		\label{fig:fig1}
	\end{figure}
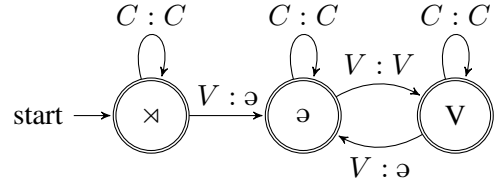

	\begin{example}
		\label{ex:osl}
		\textit{Rhythmic reduction} is a phonological process in which alternating vowels in a word undergo reduction. The examples in (\ref{ex:ojibwereduction}) show rhythmic reduction in the Odawa variety of Ojibwe circa 1912, as documented by Edward Sapir. In our representation of reduction, vowels are reduced to \textipa{@}, starting from the first vowel. There is no reason to believe that \textipa{@} appears in underlying forms.
		\begin{exe}
			\ex Rhythmic reduction in Ojibwe circa 1912 \citep{rhodesAlgonquianTradeLanguages2012}
			\label{ex:ojibwereduction}
			\begin{xlist}
				\ex /\textipa{m2kIzIn2n}/ $\leadsto$ [\textipa{m@kIz@n2n}] `shoes'
				\ex /\textipa{gUtIgUmIn2gIbIna:d}/ $\leadsto$ \\
				\lbrack\textipa{g@tIg@mIn@gIb@na:d}\rbrack\ `if he rolls him'
			\end{xlist}
		\end{exe}
		\autoref{fig:fig1} shows an SFST that implements the rhythmic reduction pattern illustrated in (\ref{ex:ojibwereduction}). We represent the pattern using an alphabet of three symbols: $C$, representing consonants; $V$, representing vowels that have not been reduced; and \textipa{@}, representing vowels that have been reduced. Observe that this SFST is onward and $2$-TOSL, with $C$ off the tier: each state represents the most recent vowel in the ouput stream.\footnote{For clarity, we omit the $\langle \Empty, \cdot \rangle$ portions of the state names.}
	\end{example}

	\section{Rhythmic Syncope}
	\label{sec:sec4}
	
	\textit{Rhythmic syncope} is a phonological process in which every second vowel in a word is deleted. The examples of (\ref{ex:macushisyncope}) show rhythmic syncope in Macushi, in which deletion begins with the first vowel.\footnote{The synchronic status of rhythmic syncope is a matter of current discussion, as its development appears to push a phonological system into dramatic restructuring \citep{bowersNishnaabemwinRestructuringControversyToappear}.} 
	\begin{exe}
		\ex Rhythmic syncope in Macushi \citep{hawkinsPatternsVowelLoss1950}
		\label{ex:macushisyncope}
		\begin{xlist}
			\ex /\textipa{piripi}/ $\leadsto$ [\textipa{pripi}] `spindle'
			\ex /\textipa{wanamari}/ $\leadsto$ [\textipa{wnamri}] `mirror'
		\end{xlist}
	\end{exe}

	In this section, we show that rhythmic syncope is not TIOSL. To see this, we formalize rhythmic syncope as a function over two alphabet symbols: $C$, representing consonants, and $V$, representing vowels. This idealization does not affect the argument that rhythmic syncope is not TIOSL, presented in Proposition \ref{prop:rsnottsl}.

	\begin{definition}
		The \textit{rhythmic syncope function} $\rs:\lbrace C, V \rbrace^* \to \lbrace C, V \rbrace^*$ is defined as follows. For $c_0, c_1, \dots, c_n \in C^*$,
		\[
		\rs(c_0Vc_1Vc_2 \dots Vc_n) = c_0v_1c_1v_2c_2 \dots v_nc_n\text{,}
		\]
		where for each $i$, $v_i = V$ if $i$ is even and $v_i = \Empty$ if $i$ is odd.\footnote{While $\rs$ is defined on strings of phonemes with no prosodic symbols, phonological analyses often assume that the input is parsed into feet with iambic or trochaic stress. Such analyses are discussed in Section \ref{sec:sec7}.}
	\end{definition}

	The intuition underlying the argument below is that $(i - 1)$-suffixes of the input and $(j - 1)$-suffixes of the output do not contain information about whether vowels occupy even or odd positions within the input and output strings. Therefore, while an $i, j$-TIOSL SFST can record the most recent vowels read from the input stream and emitted to the output stream, this information is not sufficient for determining whether or not the SFST should delete a vowel.

	\begin{proposition}
		\label{prop:rsnottsl}
		The rhythmic syncope function is not $i, j$-TIOSL on tier $\tau$ for any $i, j > 0$ and any $\tau:\lbrace C, V \rbrace^* \to \lbrace C, V \rbrace^*$.
	\end{proposition}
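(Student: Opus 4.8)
The plan is to contradict the algebraic definition of TIOSL (Definition~\ref{defn:tioslalg}) head-on: for arbitrary $i, j > 0$ and an arbitrary tier $\tau$ on $\lbrace C, V \rbrace$, I would exhibit two input strings $w, x$ with $\suff^{i - 1}(\tau(w)) = \suff^{i - 1}(\tau(x))$ and $\suff^{j - 1}(\tau(\rs^\gets(w))) = \suff^{j - 1}(\tau(\rs^\gets(x)))$ but $\rs^\to_w \neq \rs^\to_x$. The witnesses will be consecutive blocks of vowels, $w = V^{2m}$ and $x = V^{2m + 1}$, with $m$ chosen large relative to $i$.

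The first step is to compute $\rs^\gets$ on vowel blocks. For any $k$ and any continuation $y$, the string $\rs(V^k y)$ begins with $V^{\lfloor k/2 \rfloor}$: in $V^k y$ the first $k$ vowels are exactly the leading block, so positions $1, \dots, k$ contribute $v_1 v_2 \cdots v_k = V^{\lfloor k/2 \rfloor}$ to the output regardless of $y$; and taking $y = \Empty$ shows the bound is attained, whence $\rs^\gets(V^k) = V^{\lfloor k/2 \rfloor}$. In particular $\rs^\gets(V^{2m}) = \rs^\gets(V^{2m + 1}) = V^m$, so $\tau(\rs^\gets(w)) = \tau(\rs^\gets(x))$ and the output-side suffix condition holds automatically, for every $j$ and every $\tau$.

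The second step handles the input side and the translations. Since $\tau(V^n)$ equals $V^n$ if $V$ is on $\tau$ and $\Empty$ if $V$ is off $\tau$ (the status of $C$ being irrelevant here, and likewise for the vowel block $\rs^\gets(V^k)$, so the four tiers collapse to two cases), we have $\suff^{i - 1}(\tau(V^{2m})) = \suff^{i - 1}(\tau(V^{2m + 1}))$ as soon as $2m \geq i - 1$; so I fix $m \geq \max(i - 1, 1)$. Finally, $\rs(V^{2m} \cdot V) = \rs(V^{2m + 1}) = V^m = \rs^\gets(V^{2m})$ forces $\rs^\to_{V^{2m}}(V) = \Empty$, while $\rs(V^{2m + 1} \cdot V) = \rs(V^{2m + 2}) = V^{m + 1} = \rs^\gets(V^{2m + 1}) \cdot V$ forces $\rs^\to_{V^{2m + 1}}(V) = V$. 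Hence $\rs^\to_w \neq \rs^\to_x$, so $\rs$ fails to be $i, j$-TIOSL on $\tau$; since $i$, $j$, and $\tau$ were arbitrary, the claim follows.

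There is no serious obstacle here: the argument is a direct unwinding of the algebraic definition. The only points needing care are the computation of $\rs^\gets(V^k)$ --- in particular verifying that no continuation of $V^k$ forces output beyond $V^{\lfloor k/2 \rfloor}$ that is common to all continuations --- and the bookkeeping observation that, because both $V^k$ and $\rs^\gets(V^k)$ are pure vowel blocks, only whether $V$ lies on $\tau$ matters, so a single pair of witnesses defeats every tier simultaneously.
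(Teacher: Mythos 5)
Your proposal is correct and follows essentially the same route as the paper's proof: the paper also takes $w = V^k$ and $x = V^{k+1}$ with $k > i$ even, notes $\rs^\gets(w) = \rs^\gets(x) = V^{k/2}$, splits on whether $V$ is on $\tau$, and derives $\rs^\to_w(V) = \Empty \neq V = \rs^\to_x(V)$. The only difference is that you spell out the justification of $\rs^\gets(V^k) = V^{\lfloor k/2\rfloor}$, which the paper leaves as an observation.
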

	
	\begin{proof}		
		Let $k > i$ be even. Consider the strings $w := V^k$ and $x := V^{k + 1}$. Observe that $\rs^\gets(w) = \rs^\gets(x) = V^{k / 2}$; thus $\suff^{j - 1}(\tau(\rho^\gets(w))) = \suff^{j - 1}(\tau(\rho^\gets(x)))$. Now, if $V$ is on $\tau$, then $\suff^{i - 1}(\tau(w)) = V^{i - 1} = \suff^{i - 1}(\tau(x))$, and if $V$ is off $\tau$, then $\suff^{i - 1}(\tau(w)) = \lb^{i - 1} = \suff^{i - 1}(\tau(x))$. Thus, if $\rs$ is $i, j$-TIOSL on tier $\tau$, then $\rs_{w}^\to = \rs_{x}^\to$. However, letting $y := V^{k/2}$, observe that
		\begin{align*}
			y &= \rs(wV) = \rs^\gets(w)\rs_{w}^\to(V) = y\rs_{w}^\to(V) \\
			yV &= \rs(xV) = \rs^\gets(x)\rs_x^\to(V) = y \rs_x^\to(V).
		\end{align*}
		This means that $\rs_w^\to(V) = \Empty$ but $\rs_x^\to(V) = V$, so $\rs$ is not $i, j$-TIOSL on tier $\tau$.
	\end{proof}
	
	\section{Synchronized Strictly Local Functions}
	\label{sec:sec5}
	
	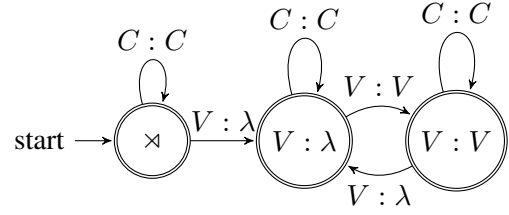
\begin{figure}
		\begin{center}
			\begin{tikzpicture}[>=stealth',shorten >=1pt,auto,node distance=2cm]
			
			\node [initial, state, accepting] (q2) {$\lb$};	
			\node [state, accepting] (q0) [right of=q2] {$V:\Empty$};
			\node [state, accepting] (q1) [right of=q0] {$V:V$};
			
			\path [->]
			
			(q2) edge [loop above] node {$C:C$} (q2)
			(q2) edge node {$V:\Empty$} (q0)
			(q0) edge [loop above] node {$C:C$} (q0)
			(q0) edge [bend left] node {$V:V$} (q1)
			(q1) edge [loop above] node {$C:C$} (q1)
			(q1) edge [bend left] node {$V:\Empty$} (q0);
			
			\end{tikzpicture}
		\end{center}
		\caption{An SFST for rhythmic syncope.}
		\label{fig:fig2}
	\end{figure}

	Proposition \ref{prop:rsnottsl} raises the question of how to characterize the kind of computation that effects rhythmic syncope. To investigate this question, \autoref{fig:fig2} shows a natural SFST implementation of rhythmic syncope. The states in this SFST record the most recent action performed by the SFST. If the most recent action was to delete a vowel ($V:\Empty$), then the next vowel the SFST encounters is not deleted ($V:V$); otherwise, the next vowel is deleted. This SFST is strikingly similar to the rhythmic reduction SFST in Figure 1. There, the special symbol \textipa{@}, which is not part of the input alphabet, indicates the location of a reduced vowel, effectively recording the previous action in the output. Since there is no way to mark the location of a deleted symbol, the SFST in \autoref{fig:fig2} explicitly records its previous action in its state names. Thus, the rhythmic syncope SFST may be seen as a generalization of the rhythmic reduction SFST. The goal of this section is to define a class of functions, known as the \textit{tier-based synchronized strictly local} (TSSL) functions, based on this intuition. Following Section \ref{sec:sec2}, we begin by defining the TSSL functions algebraically in terms of the minimal SFST, and then we define a canonical SFST format for the TSSL functions.
	
	Recall that at each time step, an SFST must read exactly one input symbol while producing an output string of any length. Since the minimal SFST for a function $f$ must produce $f^\gets(z)$ after reading the input string $z$, we can determine the possible actions of $f$ by comparing $f^\gets(z)$ with $f^\gets(zx)$ for arbitrary $z \in \Sigma^*$ and $x \in \Sigma$.
	\begin{definition}
		Let $f:\Sigma^* \to \Gamma^*$. The \textit{actions of $f$} are the alphabet $\A{f} \subseteq \Sigma \times \Gamma^*$ defined as follows.
		\[
		\A{f} := \lbrace \langle x, y \rangle | \exists z \in \Sigma^*.f^\gets(zx) = f^\gets(z)y \rbrace
		\] 
		We denote elements $\langle x, y \rangle$ of $\A{f}$ by $x:y$.
	\end{definition}
	Strings over $\A{f}$ represent computation histories of the minimal SFST for $f$.
	\begin{definition}
		Let $x \in \Sigma^*$ and let $f:\Sigma^* \to \Gamma^*$. The \textit{run of $f$ on input $x$} is the string $f^\Leftarrow(x) \in \A{f}^*$ defined as follows.
		\begin{itemize}
			\item If $|x| \leq 1$, then $f^\Leftarrow(x) := x:f^\gets(x)$.
			\item If $x = yz$, where $|y| \geq 1$ and $|z| = 1$, then $f^\Leftarrow(x) := f^\Leftarrow(y)(z:w)$, where $w$ is the unique string such that $f^\gets(x) = f^\gets(y)w$.
		\end{itemize}
	\end{definition}
	The notation $f^\Leftarrow$ allows us to define the TSSL functions in a straightforward manner, highlighting the analogy to the TIOSL functions.
	\begin{definition}
		Fix $k > 0$ and let $\tau$ be a tier on $\Sigma \times \Gamma^*$. A function $f:\Sigma^* \to \Gamma^*$ is \textit{$k$-synchronized strictly local on tier $\tau$} ($k$-TSSL) if for all $x, y \in \Sigma^*$, if $\suff^{k - 1}(\tau(f^\Leftarrow(x))) = \suff^{k - 1}(\tau(f^\Leftarrow(y)))$, then $f^\to_x = f^\to_y$.
	\end{definition}
	
	Now, let us define the canonical SFSTs for TSSL functions. We define the actions of an SFST to be its possible transition labels.
	\begin{definition}
		Let $T = \langle Q, \Sigma, \Gamma, q_0, \to, \sigma \rangle$ be an SFST. The \textit{actions of $T$} are the alphabet
		\[
		\A{T} := \left\lbrace \langle x, y \rangle \middle| \exists q \exists r.\mathalpha{\to}(q, x) = \langle r, y \rangle \right\rbrace.
		\]
		We denote elements $\langle x, y \rangle$ of $\A{T}$ by $x:y$.
	\end{definition}
	Again, the definition of the TSSL SFSTs is directly analogous to that of the TIOSL SFSTs.
	\begin{definition}
		\label{defn:tsslsfst}
		Fix $k > 0$ and let $\tau$ be a tier on $\Sigma \times \Gamma^*$. An SFST $T = \langle Q, \Sigma, \Gamma, q_0, \to, \sigma \rangle$ is \textit{$k$-synchronized strictly local on tier $\tau$} ($k$-TSSL) if the following conditions hold.
		\begin{itemize}
			\item $Q = \left( \lbrace \lb \rbrace \cup \A{T} \right)^{k - 1}$ and $q_0 = \lb^{k - 1}$.
			\item For every $q \in Q$, if $\mathalpha{\to}(q, x) = \langle r, y \rangle$, then 
			\[
			r = \suff^{k - 1}\left( \tau\left( q(x:y) \right) \right).
			\]
		\end{itemize}
	\end{definition}
	As is the case with TIOSL SFSTs, TSSL SFSTs compute exactly the class of TSSL functions when they are required to be onward.
	\begin{theorem}
		Fix $k > 0$, and let $\tau$ be a tier on $\Sigma \times \Gamma^*$. A function is $k$-TSSL on tier $\tau$ if and only if it is computed by an onward SFST that is $k$-TSSL on tier $\tau$.
	\end{theorem}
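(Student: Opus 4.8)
The proof has two directions, mirroring the structure of the analogous theorem for TIOSL functions. For the easier direction, suppose $T$ is an onward SFST that is $k$-TSSL on tier $\tau$ and computes $f$; I would show $f$ is $k$-TSSL on tier $\tau$. The key observation is that $\A{T} = \A{f}$ when $T$ is onward and minimal, or at least that the run $f^\Leftarrow(x)$ and the sequence of transition labels taken by $T$ on reading $x$ coincide as strings over a common action alphabet. Onwardness is what forces the output emitted along the transition reading the final symbol of $x$ to be exactly the string $w$ with $f^\gets(x) = f^\gets(y)w$ (using the second bulleted fact after the definition of $f$ top). So the state reached by $T$ after reading $x$ is $\suff^{k-1}(\tau(f^\Leftarrow(x)))$ by an induction on $|x|$ using the canonical-form transition condition. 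Then if $\suff^{k-1}(\tau(f^\Leftarrow(x))) = \suff^{k-1}(\tau(f^\Leftarrow(y)))$, the two inputs reach the same state, hence $f^\to_x = f^\to_y$ by the first bulleted fact after the definition of $f$ top (states determine tails).

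For the converse, suppose $f$ is $k$-TSSL on tier $\tau$; I would build an onward canonical $k$-TSSL SFST $T$ computing $f$. Take $Q = (\{\lb\} \cup \A{f})^{k-1}$, $q_0 = \lb^{k-1}$, and define the transition function so that from state $q$ on input $x$ the machine reaches $\suff^{k-1}(\tau(q(x{:}y)))$, emitting $y$, where $y$ is the unique output prescribed by an action of $f$ compatible with the history class $q$. To make this well-defined I need: (i) for each reachable $q$ and each $x \in \Sigma$, there is a well-defined action $x{:}y \in \A{f}$ — this is where the $k$-TSSL hypothesis is used, since $q$ records the last $k-1$ tier-filtered actions and the hypothesis says this determines $f^\to$, hence determines the next action $x{:}y$; (ii) the final output function $\sigma(q) := f^\to_z(\Empty)$ for any $z$ with run ending in history-class $q$ is well-defined, again by the hypothesis. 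Onwardness follows because the actions of $f$ were defined precisely as the increments $f^\gets(zx) = f^\gets(z)y$, so no common nonempty prefix can be factored out. A correctness induction on input length shows $q_0 \xrightarrow{x:f^\gets(x)} q$ with $q = \suff^{k-1}(\tau(f^\Leftarrow(x)))$, whence $f(x) = f^\gets(x)\sigma(q) = f^\gets(x)f^\to_x(\Empty)$, which equals $f(x)$.

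The main obstacle is the well-definedness in step (i)–(ii) of the converse: I must check that a state $q \in Q$, which is just an abstract string of $\le k-1$ actions, actually \emph{arises} as $\suff^{k-1}(\tau(f^\Leftarrow(z)))$ for some $z$ before I ask what transition it should have — and more subtly, that \emph{all} inputs $z$ whose run has the same tier-filtered $(k-1)$-suffix genuinely agree on the next action and on the residual output. The $k$-TSSL condition gives $f^\to_x = f^\to_y$ from equal suffixes of runs, and from $f^\to_x = f^\to_y$ one extracts both the next action (compare $f^\to_x$ restricted to inputs $x'$ with $|x'| \le 1$) and the final output ($f^\to_x(\Empty)$); so the condition is exactly strong enough. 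Unreachable states can be given arbitrary (say, self-looping) transitions without affecting the computed function. I would also need the minor lemma that $\A{f}$ is finite, which follows from subsequentiality of $f$ — $f$ is $k$-TSSL hence, by the forward direction applied to any SFST, subsequential, or one argues directly that only finitely many increments $y$ occur — so that $Q$ is a finite set and $T$ is a genuine SFST.
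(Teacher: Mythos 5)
Your proof takes essentially the same approach as the paper's Appendix: the easier direction via an induction showing that an onward $k$-TSSL SFST's state after reading $x$ equals $\suff^{k-1}(\tau(f^\Leftarrow(x)))$ (so equal tier-filtered run suffixes force equal states and hence equal translations), and the converse via the canonical SFST over $(\lbrace\lb\rbrace\cup\A{f})^{k-1}$ whose transitions and final outputs are well-defined precisely because of the $k$-TSSL hypothesis. Your extra worries about unreachable states and the finiteness of $\A{f}$ go beyond what the paper addresses (it relegates well-definedness to a remark and never discusses finiteness), though note that your proposed justification for finiteness---invoking the forward direction of the very theorem being proved---is circular as stated.
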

	We leave the proof of this fact to Appendix \ref{sec:appendix}.

	\section{Properties of TSSL Functions}
	\label{sec:sec6}
	
	Having now defined the TSSL functions, this section investigates some of their formal properties. Subsection \ref{sec:sub61} compares the TSSL functions to the TISL, TOSL, and TIOSL functions. Subsection \ref{sec:sub62} observes that TSSL SFSTs compute a large class of functions when they are not required to be onward.
	
	\subsection{Relation to TIOSL Functions}
	\label{sec:sub61}
	
	A natural first question regarding the TSSL functions is that of how they relate to previously-proposed classes of subregular functions. We know from the discussion of rhythmic syncope that the TSSL functions are not a subset of the TIOSL functions: we have already seen that the rhythmic syncope function is $2$-TSSL but not $i, j$-TIOSL for any $i, j$. We will see in this subsection that the TIOSL functions are not a subset of the TSSL functions, though both function classes fully contain the TISL and TOSL functions. Therefore, the two function classes are incomparable, and offer two different ways to generalize the TISL and TOSL functions.
	
	The fact that the TSSL functions contain the TISL and TOSL functions follows from the observation that actions contain information about input and output symbols. Remembering the $i$ most recent actions automatically entails remembering the $i$ most recent input symbols, and the $j$ most recent output symbols can be extracted from the $j$ most recent actions if deletions are ignored.
	
	\begin{proposition}
		\label{prop:tisltosltssl}
		Fix $k > 0$. Every $k$-TISL function and every $k$-TOSL function is $k$-TSSL.
	\end{proposition}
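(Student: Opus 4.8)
The plan is to treat the TISL and TOSL cases separately, in each case converting the tier $\tau$ on $\Sigma\cup\Gamma$ that witnesses the hypothesis into a tier $\tau'$ on $\Sigma\times\Gamma^*$ witnessing $k$-TSSL-ness of $f$. Throughout, the guiding fact is that $f^\Leftarrow(x)$ records, in order, one action $a:b$ per input symbol, with $a$ the symbol read and $b$ the chunk of output emitted; consequently the first coordinates along $f^\Leftarrow(x)$ spell out $x$ and the second coordinates concatenate to $f^\gets(x)$. Both of these are verified by a routine induction on $|x|$ from the definition of $f^\Leftarrow$.

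For the TISL case, suppose $f$ is $k,1$-TIOSL on $\tau$, and define $\tau'(a:b) := a:b$ when $\tau(a)=a$ and $\tau'(a:b) := \Empty$ when $\tau(a) = \Empty$. Let $\pi$ be the length-preserving homomorphism on strings over $\lbrace\lb\rbrace\cup(\Sigma\times\Gamma^*)$ that fixes $\lb$ and sends $a:b \mapsto a$. A routine induction gives $\pi(f^\Leftarrow(x)) = x$, and since $\pi(\tau'(a:b)) = \tau(\pi(a:b))$ for every action, $\pi(\tau'(f^\Leftarrow(x))) = \tau(x)$; as $\pi$ is length-preserving and fixes $\lb$, it commutes with $\suff^{k-1}$, so $\pi$ sends $\suff^{k-1}(\tau'(f^\Leftarrow(x)))$ to $\suff^{k-1}(\tau(x))$. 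Hence equality of the $(k-1)$-suffixes of $\tau'(f^\Leftarrow(x))$ and $\tau'(f^\Leftarrow(y))$ forces $\suff^{k-1}(\tau(x)) = \suff^{k-1}(\tau(y))$, and the TISL hypothesis --- whose output clause is vacuous since $j=1$ --- gives $f^\to_x = f^\to_y$.

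For the TOSL case, suppose $f$ is $1,k$-TIOSL on $\tau$, and define $\tau'(a:b) := a:b$ when $\tau(b)\neq\Empty$ and $\tau'(a:b):=\Empty$ otherwise. Let $\beta$ be the homomorphism that kills $\lb$ and sends $a:b \mapsto \tau(b)$. Since the second coordinates along $f^\Leftarrow(x)$ concatenate to $f^\gets(x)$, we have $\beta(f^\Leftarrow(x)) = \tau(f^\gets(x))$, and deleting the off-$\tau'$ actions (those whose output chunk contributes $\Empty$ under $\tau$) does not affect this, so $\beta(\tau'(f^\Leftarrow(x))) = \tau(f^\gets(x))$ as well. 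The crux is to show that $\suff^{k-1}(\tau(f^\gets(x)))$ is already determined by $\suff^{k-1}(\tau'(f^\Leftarrow(x)))$, namely that it equals $\suff^{k-1}\bigl(\beta(\suff^{k-1}(\tau'(f^\Leftarrow(x))))\bigr)$. Writing $v=\tau'(f^\Leftarrow(x))$: if $|v|\ge k-1$ then $v = P\cdot\suff^{k-1}(v)$ for some $P$, and since each of the $k-1$ on-$\tau'$ actions in $\suff^{k-1}(v)$ contributes at least one symbol under $\beta$, the string $\beta(\suff^{k-1}(v))$ has length at least $k-1$ and therefore shares its $(k-1)$-suffix with $\beta(v)$; if $|v| < k-1$ then $\suff^{k-1}(v)$ is just $v$ prefixed with $\lb$'s, which $\beta$ discards. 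Either way the desired equality holds, so equality of the action-suffixes for $x$ and $y$ propagates to equality of $\suff^{k-1}(\tau(f^\gets(x)))$ and $\suff^{k-1}(\tau(f^\gets(y)))$, and the TOSL hypothesis (input clause vacuous) yields $f^\to_x = f^\to_y$.

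The routine ingredients are the two inductions over $f^\Leftarrow$ and the standard manipulations of $\suff^{k-1}$ and $\lb$-padding. I expect the only genuine obstacle to be the TOSL case: because a single action can emit zero output symbols (a deletion) or more than one, the map ``last $k-1$ actions $\mapsto$ last $k-1$ output symbols'' is neither injective nor surjective, and it is the case split on whether $|v|\ge k-1$ --- together with the observation that every on-$\tau'$ action contributes at least one symbol to $\tau(f^\gets(x))$ --- that repairs the correspondence.
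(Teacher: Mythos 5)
Your proposal is correct and takes essentially the same route as the paper's proof: the same two action-tiers (keep $a:b$ iff $a$ is on $\tau$ for TISL; iff $\tau(b)\neq\Empty$ for TOSL), the same reduction of action-suffix equality to input- or output-suffix equality, and the same key observation that every on-tier action in the TOSL case contributes at least one output symbol. If anything, your explicit case split on $|v|\ge k-1$ versus $|v|<k-1$ handles the $\lb$-padding more carefully than the paper's own write-up.
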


	\begin{proof}
		Let $f:\Sigma^* \to \Gamma^*$, and let $\tau$ be a tier on $\Sigma \cup \Gamma$. First, suppose that $f$ is $k$-TISL on tier $\tau$. Let $\upsilon$ be a tier on $\Sigma \times \Gamma^*$ defined as follows: an action $x:y$ is on $\upsilon$ if and only if $x$ is on $\tau$. Now, suppose $w, x \in \Sigma^*$ are such that $\suff^{k - 1}(\upsilon(f^\Leftarrow(w))) = \suff^{k - 1}(\upsilon(f^\Leftarrow(x)))$. Write
		\begin{align*}
			\upsilon(f^\Leftarrow(w)) &= (w_1:y_1)(w_2:y_2) \dots (w_n:y_n) \\
			\upsilon(f^\Leftarrow(x)) &= (x_1:z_1)(x_2:z_2) \dots (x_n:z_n).
		\end{align*}
		Then, we have $\tau(w) = w_1w_2 \dots w_n$ and $\tau(x) = x_1x_2 \dots x_n$. For all $i > n - k + 1$, $w_i:y_i = x_i:z_i$, and therefore $w_i = x_i$. But this means that $\suff^{k - 1}(\tau(w)) = \suff^{k - 1}(\tau(x))$, and since $f$ is $k$-TISL on tier $\tau$, $f_w^\gets = f_x^\gets$. We conclude that $f$ is $k$-TSSL on tier $\upsilon$.
		
		Next, suppose that $f$ is $k$-TOSL on tier $\tau$. Let $\varphi$ be a tier on $\Sigma \times \Gamma^*$ defined as follows: an action $x:y$ is on $\varphi$ if and only if $\tau(y) \neq \Empty$. Now, suppose $w, x \in \Sigma^*$ are such that $\suff^{k - 1}(\varphi(f^\Leftarrow(w))) = \suff^{k - 1}(\varphi(f^\Leftarrow(x)))$. Write
		\begin{align*}
		\varphi(f^\Leftarrow(w)) &= (w_1:y_1)(w_2:y_2) \dots (w_n:y_n) \\
		\varphi(f^\Leftarrow(x)) &= (x_1:z_1)(x_2:z_2) \dots (x_n:z_n).
		\end{align*}
		Now, $\tau(f^\gets(w)) = y_1y_2 \dots y_n$ and $\tau(f^\gets(x)) = z_1z_2 \dots z_n$. Again, for all $i > n - k + 1$ we have $w_i:y_i = x_i:z_i$, so $y_i = z_i$. Observe that
		\begin{align*}
		&\mathrel{\phantom{=}} \suff^{k - 1}(\tau(f^\gets(w))) = \suff^{k - 1}(y_jy_{j + 1} \dots y_n) \\
		&= \suff^{k - 1}(z_jz_{j + 1} \dots z_n) = \suff^{k - 1}(\tau(f^\gets(x)))\text{,}
		\end{align*}
		where $j = n - k + 2$. Since $f$ is $k$-TOSL on tier $\tau$, $f_w^\to = f_x^\to$, so $f$ is $k$-TSSL on tier $\varphi$.
	\end{proof}

	This intuition does not carry over to the TIOSL functions. In Proposition \ref{prop:tisltosltssl}, the proposed action tiers ignore symbols off the input and output tiers, thus ensuring that the relevant input and output symbols can always be recovered from the computation history. This approach encounters problems when an onward TIOSL SFST deletes symbols on the tier. Such SFSTs perform actions of the form $x:\Empty$, where $x$ is on the tier. These actions do not record any output symbols, but they must be kept on the tier in a TSSL implementation so that the input symbol $x$ can be recovered. If too many $(x:\Empty)$s are performed consecutively, they can overwhelm the memory of a TSSL SFST, causing it to forget the most recent output symbols. The following construction features exactly this kind of behavior.

	\begin{proposition}
		There exists a function that is $i, j$-TIOSL for some $i, j$ but not $k$-TSSL for any $k$.
	\end{proposition}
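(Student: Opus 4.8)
The plan is to construct a single function $g$ that deletes an input symbol lying on every natural input tier and whose behaviour immediately after a block of such deletions depends on an output symbol emitted \emph{before} the block. Take $\Sigma = \lbrace a, b\rbrace$ and $\Gamma = \lbrace b, c\rbrace$, and let $g$ delete every $a$ while sending each $b$ to an output symbol computed as follows: scan the $b$'s of the input left to right, maintaining a symbol $s \in \lbrace b, c\rbrace$ initialized to $b$; whenever the $b$ being read is immediately preceded in the input by an $a$, replace $s$ by the other symbol; then emit $s$.

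First I would verify that $g^\gets = g$ (appending to the input only appends to the output), and deduce that $g^\to_w$ is determined by the last symbol of $w$ together with the last symbol of $g(w)$ (read as $b$ when $g(w) = \Empty$): a leading $a$ of any continuation is deleted, and a leading $b$ is emitted as $s$ or as its flip according to whether the previous input symbol was $a$. Hence $g$ is $2,2$-TIOSL on the identity tier on $\Sigma \cup \Gamma$. For contrast, $g$ is neither $k$-TISL nor $k$-TOSL for any $k$: the last symbol of $g(w)$ changes once for each $b$ of $w$ that follows an $a$, an unbounded count, and the last symbol of $w$ leaves no trace in $g(w)$, e.g.\ $g(w) = g(wa)$ while $g^\to_w \neq g^\to_{wa}$.

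The heart of the proof is to show that $g$ is not $k$-TSSL on \emph{any} tier $\tau$ on $\Sigma \times \Gamma^*$, for \emph{any} $k > 0$. Here $\A{g} = \lbrace a:\Empty, b:b, b:c\rbrace$, and $g^\Leftarrow(w)$ is $w$ with each $a$ relabelled $a:\Empty$ and the $i$-th $b$ relabelled $b:s_i$. For $k \geq 2$ I would split on whether $a:\Empty$ lies on $\tau$. If it does, take $w = a^{k}$ and $x = aba^{k}$: then $g^\Leftarrow(w) = (a:\Empty)^{k}$ and $g^\Leftarrow(x) = (a:\Empty)(b:c)(a:\Empty)^{k}$, so $\tau(g^\Leftarrow(w))$ and $\tau(g^\Leftarrow(x))$ both terminate in $(a:\Empty)^{k}$ and hence share the $(k-1)$-suffix $(a:\Empty)^{k-1}$; but $g^\gets(w) = \Empty$ gives $g^\to_w(b) = c$ while $g^\gets(x) = c$ gives $g^\to_x(b) = b$. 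If $a:\Empty$ lies off $\tau$, take $w = b^{k-1}$ and $x = b^{k-1}a$: then $g^\Leftarrow(x) = g^\Leftarrow(w)(a:\Empty)$, so $\tau(g^\Leftarrow(w)) = \tau(g^\Leftarrow(x))$ outright, yet the last symbol of $w$ is $b$ and that of $x$ is $a$, so $g^\to_w(b) = b \neq c = g^\to_x(b)$. The case $k = 1$ needs no tier analysis, since $\suff^{0}$ is always $\Empty$ and $g$ has distinct translations (e.g.\ $g^\to_\Empty \neq g^\to_a$).

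The step I expect to be the real obstacle is ensuring that the deleted symbol is genuinely \emph{unavoidable} on the tier, so that this one $g$ defeats every $\tau$ at once: the construction must make the run's bounded memory get swamped by $a:\Empty$'s whenever that action is retained, and must make the decisive output symbol unrecoverable whenever that action is dropped --- which is exactly what forces $g$'s dependence on the last input symbol to be essential rather than cosmetic. In the writeup I would be most careful to confirm that the four configurations (value of $s$, paired with whether the last input symbol was $a$) are pairwise inequivalent as translations, and that the $(k-1)$-suffix bookkeeping is correct at $k = 1$ and $k = 2$.
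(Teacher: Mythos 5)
Your proof is correct, and it follows essentially the same strategy as the paper's: exhibit a $2,2$-TIOSL function that deletes a tier symbol ($a:\Empty$), then split on whether $a:\Empty$ is on the action tier, using a long block of $(a:\Empty)$'s to swamp the $(k-1)$-window in one case and the invisibility of the deletion in the other. The paper's witness function differs in its details (it remembers the \emph{first} input symbol via the output tier rather than a flipping state $s$), but the mechanism and case analysis are the same.
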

	
	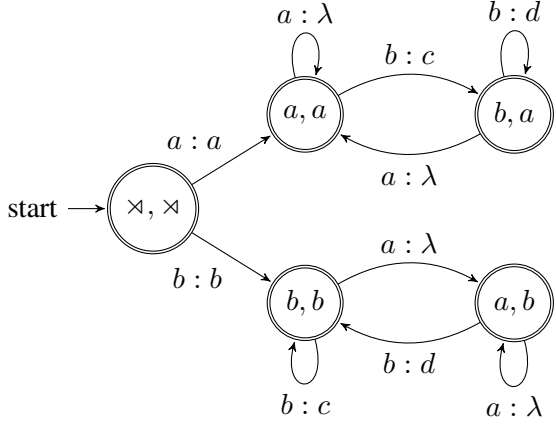
\begin{figure}
		\begin{center}
			\begin{tikzpicture}[>=stealth',shorten >=1pt,auto,node distance=2cm]
			
			\node [initial, state, accepting] (q0) {$\lb, \lb$};	
			
			\node [state, accepting] (aa) [right of=q0, above of=q0, yshift=-.75cm] {$a, a$};
			\node [state, accepting] (bb) [right of=q0, below of=q0, yshift=.75cm] {$b, b$};
			\node [state, accepting] (ab) [right of=bb, xshift=.75cm] {$a, b$};
			\node [state, accepting] (ba) [right of=aa, xshift=.75cm] {$b, a$};
			
			\path [->]
			
			(q0) edge node {$a:a$} (aa)
			(q0) edge node [below left] {$b:b$} (bb)
			
			(aa) edge [loop above] node {$a:\Empty$} (aa)
			(ab) edge [loop below] node {$a:\Empty$} (ab)
			(ba) edge [loop above] node {$b:d$} (ba)
			(bb) edge [loop below] node {$b:c$} (bb)
			
			(aa) edge [bend left] node {$b:c$} (ba)
			(ba) edge [bend left] node {$a:\Empty$} (aa)
			
			(bb) edge [bend left] node {$a:\Empty$} (ab)
			(ab) edge [bend left] node {$b:d$} (bb);
			
			\end{tikzpicture}
		\end{center}
		\caption{An onward $2, 2$-TIOSL SFST computing a function that is not $k$-TSSL for any $k$.}
		\label{fig:fig3}
	\end{figure}

	\begin{proof}
		Let $T$ be the SFST shown in \autoref{fig:fig3}, and let $f:\lbrace a, b \rbrace^* \to \lbrace a, b, c, d \rbrace^*$ be the function computed by $T$.\footnote{The angle brackets are omitted from the state names.} Observe that $T$ is onward and $2, 2$-TIOSL on tier $\tau$, where $a$ and $b$ are on $\tau$ but $c$ and $d$ are not, so $f$ is $2, 2$-TIOSL on tier $\tau$. $T$ always copies the first symbol of its input to the output. Thereafter, $T$ behaves as follows: all $a$s are deleted; a $b$ is changed to a $c$ if the most recent input symbol is the same as the first input symbol; a $b$ is changed to a $d$ otherwise. For example, $f(baabb) = bdc$.
		
		Let $k > 0$, and let $\upsilon$ be a tier on $\lbrace a, b \rbrace \times \lbrace a, b, c, d \rbrace^*$. Suppose that either $k = 1$ or $a:\Empty$ is not on $\upsilon$, and consider the strings $w := ba$ and $x := b$. Observe that $f^\Leftarrow(w) = (b:b)(a:\Empty)$ and $f^\Leftarrow(x) = (b:b)$. Either $\suff^{k - 1}(\upsilon(f^\Leftarrow(x))) = \lb^{k - 2}(b:b) = \suff^{k - 1}(\upsilon(f^\Leftarrow(x)))$ if $k > 1$ and $b:b$ is on $\upsilon$, or $\suff^{k - 1}(\upsilon(f^\Leftarrow(x))) = \lb^{k - 1} = \suff^{k - 1}(\upsilon(f^\Leftarrow(x)))$ if $k = 1$ or $b:b$ is not on $\upsilon$. However, $f^\to_w(b) = d$ but $f^\to_x(b) = c$, so $f$ cannot be $k$-TSSL on tier $\upsilon$.
		
		Next, suppose that $k > 1$ and $a:\Empty$ is on $\upsilon$. Consider the input strings $w := a^{k + 1}$ and $x := ba^k$. Observe that $f^\Leftarrow(w) = (a:a)(a:\Empty)^k$ and $f^\Leftarrow(x) = (b:b)(a:\Empty)^k$, thus
		\begin{align*}
		\suff^{k - 1}(\upsilon(f^\Leftarrow(w))) &= (a:\Empty)^{k - 1} \\
		&= \suff^{k - 1}(\upsilon(f^\Leftarrow(x))).
		\end{align*}
		However, $f_w^\to(b) = c$ but $f_x^\to(b) = d$, so $f$ is not $k$-TSSL on tier $\upsilon$.
	\end{proof}

	\subsection{Non-Onward TSSL SFSTs}
	\label{sec:sub62}
	
	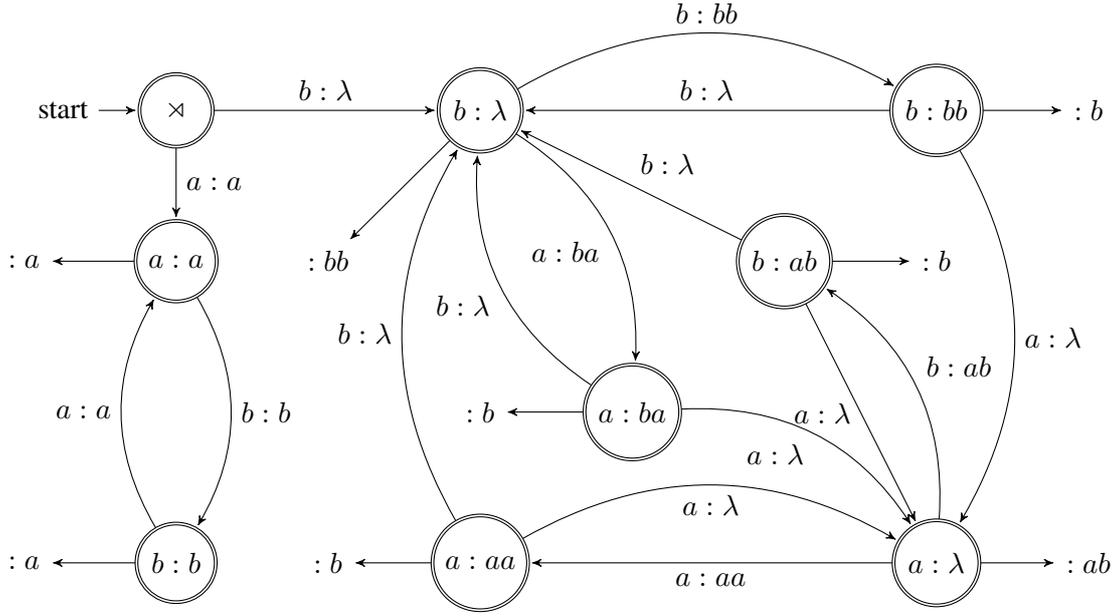
\begin{figure*}
		\centering
		
		\begin{tikzpicture}[>=stealth',shorten >=1pt,auto,node distance=2cm]
		
		\node [initial, state, accepting] (q0) {$\lb$};	
		
		\node [state, accepting] (a-a) [below of=q0] {$a:a$};
		\node (a-a1) [below of=a-a] {};
		\node [state, accepting] (b-b) [below of=a-a1] {$b:b$};
		
		\node [state, accepting] (b-) [right of=q0, xshift=2cm] {$b:\Empty$};
		\node (b-1) [below of=b-] {};
		\node (b-2) [below of=b-1] {};
		\node [state, accepting] (a-aa) [below of=b-2] {$a:aa$};
		
		\node (b-3) [right of=b-] {};
		\node (b-4) [below of=b-3] {};
		\node [state, accepting] (a-ba) [below of=b-4] {$a:ba$};
		
		\node (b-5) [right of=b-3] {};
		\node [state, accepting] (b-ab) [below of=b-5] {$b:ab$};
		
		\node [state, accepting] (b-bb) [right of=b-5] {$b:bb$};
		\node (b-bb1) [below of=b-bb] {};
		\node (b-bb2) [below of=b-bb1] {};
		\node [state, accepting] (a-) [below of=b-bb2] {$a:\Empty$};
		
		\node (sigma-a-a) [left of=a-a] {$:a$};
		\node (sigma-b-b) [left of=b-b] {$:a$};
		
		\node (temp-sigma-b-) [below of=b-] {};
		\node (sigma-b-) [left of=temp-sigma-b-] {$:bb$};
		\node (sigma-b-bb) [right of=b-bb] {$:b$};
		\node (sigma-b-ab) [right of=b-ab] {$:b$};
		\node (sigma-a-ba) [left of=a-ba] {$:b$};
		\node (sigma-a-aa) [left of=a-aa] {$:b$};
		\node (sigma-a-) [right of=a-] {$:ab$};
		
		\path [->]
		
		(q0) edge node {$a:a$} (a-a)
		(a-a) edge [bend left] node {$b:b$} (b-b)
		(b-b) edge [bend left] node {$a:a$} (a-a)
		
		(q0) edge node {$b:\Empty$} (b-)
		
		(b-) edge [bend left] node {$b:bb$} (b-bb)
		(b-) edge [bend left, below left] node {$a:ba$} (a-ba)
		
		(a-aa) edge [bend left, below] node {$a:\Empty$} (a-)
		(a-aa) edge [bend left] node {$b:\Empty$} (b-)
		
		(a-ba) edge [bend left, below left] node {$a:\Empty$} (a-)
		(a-ba) edge [bend left] node {$b:\Empty$} (b-)
		
		(b-ab) edge [left] node {$a:\Empty$} (a-)
		(b-ab) edge [above right] node {$b:\Empty$} (b-)
		
		(b-bb) edge [bend left] node {$a:\Empty$} (a-)
		(b-bb) edge [above] node {$b:\Empty$} (b-)
		
		(a-) edge node {$a:aa$} (a-aa)
		(a-) edge [bend right, above right] node {$b:ab$} (b-ab)
		
		(a-a) edge (sigma-a-a)
		(b-b) edge (sigma-b-b)
		
		(b-) edge (sigma-b-)
		(b-bb) edge (sigma-b-bb)
		(b-ab) edge (sigma-b-ab)
		(a-ba) edge (sigma-a-ba)
		(a-aa) edge (sigma-a-aa)
		(a-) edge (sigma-a-);
		
		\end{tikzpicture}
		
		\caption{A non-onward $2$-TSSL SFST computing a function that is not $k$-TSSL for any $k$.}
		\label{fig:fig4}
	\end{figure*}

	The equivalence between the two definitions of the TSSL functions presented in Section \ref{sec:sec5} crucially depends on the criterion that TSSL SFSTs be onward. In this subsection we show that without this criterion, TSSL SFSTs compute a rich class of subsequential functions. To illustrate how this is possible, let us consider an example that witnesses the separation between TSSL functions and TSSL SFSTs.

	\begin{proposition}
		\label{prop:tsslsfstnottsslfunction}
		There exists a $2$-TSSL SFST that computes a function that is not $k$-TSSL for any $k$.
	\end{proposition}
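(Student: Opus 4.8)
The plan is to take $f$ to be the function computed by the SFST $T$ of \autoref{fig:fig4} and to establish two things: that $T$ satisfies Definition~\ref{defn:tsslsfst} with $k = 2$, and that $f$ lies outside every class of $k$-TSSL functions.

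The first point is a routine check. Reading the transition labels of $T$ off the figure gives $\A{T} = \lbrace a{:}a, b{:}b, b{:}\Empty, a{:}ba, b{:}bb, b{:}ab, a{:}aa, a{:}\Empty \rbrace$, which is exactly the set of non-initial states of $T$, so $Q = \lbrace \lb \rbrace \cup \A{T}$ and $q_0 = \lb$; and one sees by inspection that every transition $q \xrightarrow{x:y} r$ of $T$ lands in the state named $x{:}y$, i.e.\ $r = \suff^1(\tau(q(x{:}y)))$ when $\tau$ is the identity tier. Hence $T$ is $2$-TSSL on the identity tier. Next I would pin down $f$ itself: tracing runs of $T$ shows that $f$ is the total function $f(x) = x\,x_1$, where $x_1$ is the first symbol of $x$ (and $f(\Empty) = \Empty$). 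The key consequence is that the translation $f^\to_x$ depends on $x$ only through its first symbol: if $x_1 \neq y_1$ then $f^\to_x$ and $f^\to_y$ send the empty continuation to different symbols, so $f^\to_x \neq f^\to_y$.

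Now I would compute the minimal onward SFST for $f$, which is where the non-onwardness of $T$ becomes essential. Because the minimal onward SFST must output $f^\gets(z) = z$ after reading $z$, it copies each input symbol as soon as it reads it and defers only the single trailing occurrence of $x_1$ to the final output function; consequently its only actions are the copy actions $a{:}a$ and $b{:}b$, so $\A{f} = \lbrace a{:}a, b{:}b \rbrace$ and $f^\Leftarrow(x_1 x_2 \cdots x_n) = (x_1{:}x_1)(x_2{:}x_2) \cdots (x_n{:}x_n)$. Thus a run of $f$ is nothing but a tagged copy of the input, and no bounded suffix of a run records $x_1$. Given $k > 0$ and a tier $\tau$ on $\Sigma \times \Gamma^*$, it is then enough to produce a string beginning with $a$ and a string beginning with $b$ whose runs have equal $(k-1)$-suffix under $\tau$, which I would do by cases on which of $a{:}a$, $b{:}b$ lie on $\tau$. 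If both are on $\tau$, take $w_a := a b^{k-1}$ and $w_b := b^k$: their runs $(a{:}a)(b{:}b)^{k-1}$ and $(b{:}b)^k$ share the $(k-1)$-suffix $(b{:}b)^{k-1}$. If exactly one of the two is on $\tau$, pad instead with whichever symbol is on $\tau$ (so $w_a := a b^{k-1}$, $w_b := b^k$ if only $b{:}b$ is on $\tau$, and $w_a := b a^{k-1}$, $w_b := a^k$ if only $a{:}a$ is on $\tau$). If neither is on $\tau$, every run projects to $\Empty$, so any two strings with distinct first symbols will do. In each case $\suff^{k-1}(\tau(f^\Leftarrow(w_a))) = \suff^{k-1}(\tau(f^\Leftarrow(w_b)))$ while $f^\to_{w_a} \neq f^\to_{w_b}$, so $f$ is not $k$-TSSL on $\tau$; since $k$ and $\tau$ were arbitrary, $f$ is not $k$-TSSL for any $k$.

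The hard part is the middle step: correctly identifying the minimal onward SFST for $f$ — equivalently, computing $f^\gets$ and $\A{f}$ — and articulating why it differs so sharply from $T$. The moral is that $T$ manages to be $2$-TSSL only because it is not onward: its state names encode its most recent action, and on inputs beginning with $b$ the deferred output of that action quietly carries the pending first symbol, so that a machine remembering only one action can still reconstruct it. The onward normal form has no deferred output to exploit, so it must instead hold $x_1$ in a state whose action history is pure copying, and a window of $k$ actions can never see past the last $k-1$ input symbols to that state. Everything else — the verification against Definition~\ref{defn:tsslsfst} and the bookkeeping in the tier split (including checking that each $w_a, w_b$ lies in the domain of $f$) — is mechanical.
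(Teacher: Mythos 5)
Your proposal is correct and follows essentially the same route as the paper: identify $f(x_1\cdots x_n)=x_1\cdots x_n x_1$, observe that $f^\gets(z)=z$ so that every run is just a tagged copy of the input, and then exhibit, for each $k$ and $\tau$, two inputs with different first symbols whose projected $(k-1)$-suffixes of runs coincide. The only differences are cosmetic --- you split into four tier cases where the paper uses two (on whether $a{:}a$ is on $\tau$), and you spell out the verification that $T$ satisfies Definition~\ref{defn:tsslsfst}, which the paper leaves as ``clearly.''
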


	\begin{proof}
		Consider the SFST in \autoref{fig:fig4}. This SFST is clearly $2$-TSSL on a tier containing all actions, and the function it computes is given by $f(xy) = xyx$, where $x \in \lbrace a, b \rbrace$ and $y \in \lbrace a, b \rbrace^*$. Observe that for any $z \in \lbrace a, b \rbrace^*$, $f^\gets(z) = z$. Therefore, writing $z = z_1z_2 \dots z_n$ with $|z_i| = 1$ for each $i$, 
		\[
		f^\Leftarrow(z) = (z_1:z_1)(z_2:z_2) \dots (z_n:z_n).
		\]
		We need to show that $f$ is not $k$-TSSL for any $k > 0$ and for any tier $\tau$ over $\lbrace a, b \rbrace \times \lbrace a, b \rbrace^*$.
		
		Fix $k$ and $\tau$. Suppose $a:a$ is on $\tau$, and consider the input strings $w = a^{k + 1}$ and $x = ba^k$. Observe that $f^\Leftarrow(w) = (a:a)^{k + 1}$ and $f^\Leftarrow(x) = (b:b)(a:a)^k$, so
		\begin{align*}
		\suff^{k - 1}(\tau(f^\Leftarrow(w))) &= (a:a)^{k - 1} \\
		&= \suff^{k - 1}(\tau(f^\Leftarrow(x))).
		\end{align*}
		However, $f_w^\to(\Empty) = a$ but $f_x^\to(\Empty) = b$, so $f$ is not $k$-TSSL on tier $\tau$.
		
		Next, suppose $a:a$ is not on $\tau$, and consider the input strings $w = b$ and $x = ab$. We have $f^\Leftarrow(w) = b:b$ and $f^\Leftarrow(x) = (a:a)(b:b)$, so 
		\begin{align*}
			\suff^{k - 1}(\tau(f^\Leftarrow(w))) &= \suff^{k - 1}(\tau(b:b)) \\
			&= \suff^{k - 1}(\tau(a:a)\tau(b:b)) \\
			&= \suff^{k - 1}(\tau((a:a)(b:b))) \\
			&= \suff^{k - 1}(\tau(f^\Leftarrow(x))).
		\end{align*}
		However, $f_w^\to(\Empty) = b$ but $f_x^\to(\Empty) = a$, so $f$ is not $k$-TSSL on tier $\tau$.
	\end{proof}

	Let $f$ be the function described in Proposition \ref{prop:tsslsfstnottsslfunction}. As discussed in the proof, an onward SFST computing $f$ must copy the current input symbol to the output stream during each time step. At the end of the computation, the final output function is responsible for adding the first input symbol to the end of the output string. Any onward TSSL SFST that attempts to compute $f$ will eventually forget the identity of the first input symbol, so the final output function cannot determine what to add to the output. The SFST $T$ in \autoref{fig:fig4} avoids this problem by exploiting its non-onwardness. If the first symbol of its input is an $a$, then $T$ behaves in an onward manner, copying the current input symbol at each time step. This can be seen in the left column of the state diagram. If the first symbol of $T$'s input is a $b$, then $T$ alternates between producing no output and producing two symbols of output. Every time $T$ performs a non-deleting action $x:y$, $y$ contains both the symbol that the onward SFST would produce at the current time step and the symbol that the onward SFST would have produced at the previous time step. This way, $T$ encodes the identity of the first symbol of its input using the manner in which it produces output---if $T$ produces output at every time step, then the first symbol is an $a$, and if it produces output every two time steps, then the first symbol is a $b$. In general, this kind of encoding trick can be applied to a wide range of SFSTs, including all SFSTs $V$ that do not perform deletions. Informally, we enumerate the states of $V$ by $\lbrace q_0, q_1, \dots, q_n \rbrace$, and we construct a TSSL SFST $S$ that simulates $V$ by producing output at various frequencies. For each $i$, $S$ produces output every $i + 1$ time steps if $V$ is in state $q_i$. If $S$ remembers at least $2(n + 1)$-many actions, then it can always deduce $V$'s state at any point in the computation, allowing it to simulate $V$.



	\section{Rhythmic Syncope in Phonology}
	\label{sec:sec7}
	
	The view of rhythmic syncope we have presented here differs substantially in approach from existing treatments of rhythmic syncope in phonological theory. \citet{mccarthySerialInteractionStress2008} identifies two major approaches to rhythmic syncope in Optimality Theory. In the \textit{pseudo-deletion} approach (e.g., \citealp{kagerRhythmicVowelDeletion1997}), the locations of symbols deleted by syncope are marked with blank symbols. This essentially makes rhythmic syncope identical to rhythmic reduction, which we have seen is $2$-TOSL. \citeauthor{mccarthySerialInteractionStress2008} himself proposes a \textit{Harmonic Serialism} approach in which rhythmic syncope is implemented in multiple steps. Firstly, stress is assigned to every second vowel in the underlying form. Then, the unstressed vowels are deleted, resulting in syncope. This kind of derivation is illustrated in (\ref{ex:mccarthy}).
	
	\begin{exe}
		\ex Rhythmic syncope in Harmonic Serialism \citep{mccarthySerialInteractionStress2008}\footnote{The full derivation proposed by \citet{mccarthySerialInteractionStress2008} includes syllabification and footing steps, which are omitted here for simplicity.} \label{ex:mccarthy} \\
		\begin{tabular}{l l}
			/\textipa{wanamari}/ & Underlying Form \\
			\textipa{wan\'{a}mar\'{\i}}& Stress \\
			\lbrack \textipa{wn\'{a}mr\'{\i}} \rbrack & Syncope
		\end{tabular}
	\end{exe}

	In both approaches, rhythmic syncope is decomposed into a $2$-TOSL function and a homomorphism. In the pseudo-deletion approach, the $2$-TOSL function is rhythmic reduction, and the homomorphism removes the \textipa{@}s. In (\ref{ex:mccarthy}), the rhythmic stress step is $2$-TOSL, while the syncope step is a homomorphism. In general, this kind of approach is extremely powerful.
	
	\begin{proposition}
		\label{prop:harmonicserialism}
		Every subsequential function $f$ can be written in the form $f = h \circ g$, where $g$ is $2$-TOSL and $h$ is a homomorphism.
	\end{proposition}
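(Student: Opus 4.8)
The plan is to realize $g$ as an SFST that simulates a fixed onward SFST for $f$ while, at every step, tagging its output with a symbol that names the state just entered; the homomorphism $h$ then merely erases these tags. The key observation that makes this a $2$-TOSL construction is that the state of a $2$-TOSL SFST is literally the last symbol it emitted, so if $g$ ends each step's output with a ``state name'' symbol, it always has $f$'s current state available to it, which is exactly the information it needs in order to imitate $f$.

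Concretely, let $T = \langle Q, \Sigma, \Gamma, q_0, \to, \sigma \rangle$ be an onward SFST computing $f$, e.g.\ its minimal SFST. Put $\Gamma' := \Gamma \uplus \{[q] \mid q \in Q\}$ with the $[q]$ fresh symbols, let $\tau$ be the full tier on $\Sigma \cup \Gamma'$, and define a $2$-TOSL SFST $T'$ on state set $\{\lb\} \cup \Gamma'$ with start state $\lb$ by: from $\lb$ on input $x$, emit $v[q']$ and move to $[q']$, where $q_0 \xrightarrow{x:v} q'$ in $T$; from $[q]$ on input $x$, emit $v[q']$ and move to $[q']$, where $q \xrightarrow{x:v} q'$ in $T$; and from any unreachable ``junk'' state $a \in \Gamma$, emit $\Empty$ and stay. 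Set the final outputs $\sigma_{T'}(\lb) := \sigma(q_0)$, $\sigma_{T'}([q]) := \sigma(q)$, $\sigma_{T'}(a) := \Empty$. Since in each transition the target state is exactly $\suff^1$ of the current state symbol followed by the emitted string, $T'$ meets the structural clauses of the $2$-TOSL SFST definition. Let $g$ be the function $T'$ computes, and let $h:(\Gamma')^* \to \Gamma^*$ be the homomorphism with $h(a) = a$ for $a \in \Gamma$ and $h([q]) = \Empty$ for $q \in Q$.

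Two things then remain. First, $T'$ must be shown onward, so that the equivalence theorem of Chandlee et al.\ (quoted above) applies and certifies that $g$ is $2$-TOSL on tier $\tau$: the start state is exempt, a junk state emits only $\Empty$, and for a state $[q]$ the relevant longest common prefix is that of $\{v_x[q_x] \mid x \in \Sigma\} \cup \{\sigma(q)\}$, where $q \xrightarrow{x:v_x} q_x$ in $T$; this is $\Empty$ because if $\sigma(q) = \Empty$ it is trivially $\Empty$, and otherwise any nonempty common prefix would lie in $\Gamma^+$ (being a prefix of $\sigma(q)$) and so --- each $v_x[q_x]$ carrying its lone marker symbol only at the end --- would be a common prefix of $\{v_x \mid x\} \cup \{\sigma(q)\}$ as well, contradicting onwardness of $T$ at $q$. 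Second, $h \circ g = f$: tracing $q_0 \xrightarrow{x_1:v_1} q_1 \xrightarrow{x_2:v_2} \cdots \xrightarrow{x_n:v_n} q_n$ in $T$ gives $g(x_1 \cdots x_n) = v_1[q_1] v_2[q_2] \cdots v_n[q_n]\,\sigma(q_n)$, and applying $h$ deletes every $[q_i]$, leaving $v_1 v_2 \cdots v_n \sigma(q_n) = f(x_1 \cdots x_n)$; the empty-input case gives $\sigma(q_0)$ on both sides.

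I expect the onwardness verification for the states $[q]$ to be the only subtle point --- one has to handle correctly the steps at which $T$ emits $\Empty$ (so $g$'s output there is the single symbol $[q_x]$) and the states $q$ with $\sigma(q) = \Empty$ --- while the remaining claims (that $T'$ obeys the $2$-TOSL SFST format, that $h$ is a homomorphism, and that $h \circ g = f$) are immediate from the construction.
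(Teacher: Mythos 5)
Your proposal is correct and is essentially the paper's own argument: both encode the current state of an onward SFST for $f$ into the output stream so that the last emitted symbol determines that state (which is exactly what makes $g$ $2$-TOSL on the full tier), and then strip the state information with a homomorphism --- the paper simply packages each step's state and emission into a single pair symbol $\langle q, y \rangle$ and checks the algebraic $2$-TOSL definition directly, whereas you interleave separate marker symbols $[q]$ and route through the canonical-SFST characterization plus onwardness. The one nit is that your onwardness check at $[q]$ appeals to onwardness of $T$ at $q$, which the definition does not guarantee for $q = q_0$ (e.g.\ when all outputs of $f$ share a nonempty common prefix); this is harmless --- declare $[q_0]$ a junk state when $q_0$ is never re-entered, or verify the algebraic definition directly as the paper does --- but it should be said.
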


	\begin{proof}
		Let $T = \langle Q, \Sigma, \Gamma, q_0, \to, \sigma \rangle$ be the minimal SFST for $f$. Define $g$ as follows. Let $g(\Empty) := \langle \sigma, f(\Empty) \rangle$. For $x_1, x_2, \dots, x_n \in \Sigma$, write
		\[
		q_0 \xrightarrow{x_1:y_1} q_1 \xrightarrow{x_2:y_2} q_2 \xrightarrow{x_3:y_3} \dots \xrightarrow{x_{n - 1}:y_{n - 1}} q_n.
		\]
		Then, $g(x_1x_2 \dots x_n) := \langle q_1, y_1 \rangle \allowbreak \langle q_2, y_2 \rangle \allowbreak \dots \allowbreak \langle q_n, \allowbreak y_n \rangle \langle \sigma, \sigma(q_n) \rangle$. Next, define $h$ so that for any $\langle q, y \rangle$, $h(\langle q, y \rangle) = y$. It is clear that $f(x) = h(g(x))$ for every $x$. We now show that $g$ is $2$-TOSL on a tier containing the full output alphabet.
		
		Fix $w, x \in \Sigma^*$. Observe that for all $z \in \Sigma^*$, $g^\gets(z) \in (Q \times \Gamma^*)^*$. Therefore, suppose that $\suff^1(g^\gets(w)) = \suff^1(g^\gets(x)) = \langle q, y \rangle$. This means that $q_0 \xrightarrow{w:u} q$ and $q_0 \xrightarrow{x:v} q$ for some $u, v \in \Gamma^*$, so $g^\to_u = g^\to_v$ by definition.
	\end{proof}

	In both pseudo-deletion and Harmonic Serialism, non-segmental phonological symbols are used to encode state information in the output, making rhythmic syncope $2$-TOSL. Proposition \ref{prop:harmonicserialism} shows that this technique can be applied to arbitrary SFSTs, and therefore results in massive overgeneration. By contrast, we have already seen that the TSSL functions are a proper subset of the subsequential functions, making action-sensitivity a more restrictive alternative to current approaches to rhythmic syncope.
	
	\section{Conclusion}
	\label{sec:sec8}
	
	The classic examples of TIOSL phenomena in phonology are local processes and unidirectional spreading processes \citep{chandleeStrictlyLocalPhonological2014}. Rhythmic syncope is qualitatively different from these phenomena in that it leaves no evidence that the process has occurred. As we have seen in Section \ref{sec:sec4}, the fact that rhythmic syncope is not TIOSL is a consequence of this property. In defining the TSSL functions, we have proposed that rhythmic syncope should be viewed as a dependency between incremental steps in a derivation, here formalized as the actions of the minimal SFST.
	
	A potential risk of such an analysis is that the notion of ``action'' is specific to the computational system used to implement rhythmic syncope, and therefore potentially subject to a broad range of interpretations. In this paper, we have used onwardness and the existence of the minimal SFST to formulate a notion of ``action-sensitivity'' that is both formalism-independent and implementation-independent. In Subsection \ref{sec:sub62}, we have seen that action-sensitivity can be made very powerful if we relax our assumptions about the nature of the computation. This means that if action-sensitivity is to be incorporated into phonological analyses of rhythmic syncope, then care should be taken to avoid loopholes like the one featured in Proposition \ref{prop:tsslsfstnottsslfunction}. Based on Proposition \ref{prop:harmonicserialism}, a similar warning can be made regarding the composition of phonological processes. When decomposing phonemena into several processes, as \citet{mccarthySerialInteractionStress2008} does in the Harmonic Serialism analysis, care should be taken to ensure that theoretical proposals do not allow for overgeneration.
	
	Outstanding formal questions regarding the TSSL functions include their closure properties and the complexity of learning TSSL functions. We leave such questions to future work.

	\bibliography{acl2019}
	\bibliographystyle{acl_natbib}
	
	\appendix
	\section{Proof of Theorem 16}
	\label{sec:appendix}

	This appendix proves the equivalence between TSSL functions and onward TSSL SFSTs. We begin by showing how to construct an onward TSSL SFST computing any given TSSL function.
	
	\begin{definition}
		\label{defn:tf}
		Let $f:\Sigma^* \to \Gamma^*$ be $k$-TSSL on tier $\tau$. Define the SFST transducer $\T{f} = \langle Q, \Sigma, \Gamma, q_0, \to, \sigma \rangle$ as follows.
		\begin{itemize}
			\item $Q := \left( \lbrace \lb \rbrace \cup \A{f} \right)^{k - 1}$ and $q_0 := \lb^{k - 1}$.
			\item For each $x \in \Sigma$, $\mathalpha{\to}(q_0, x) := \left\langle r, f^\gets(x) \right\rangle$, where $r = \suff^{k - 1}\left(\tau\left(x:f^\gets(x)\right)\right)$.
			\item For each $q \in Q \backslash \lbrace q_0 \rbrace$, let $x \in \Sigma^*$ be such that $\suff^{k - 1}\left(\tau\left(f^\Leftarrow(x)\right)\right) = q$, and let $w:y \in \A{f}$ be such that $f^\gets(xw) = f^\gets(x)y$. We define $\mathalpha{\to}(q, w) := \langle r, y \rangle$, where $r = \suff^{k - 1}\left( \tau\left( q(w:y) \right) \right)$.
			\item Fix $q \in Q.$ If $q = q_0$, then $\sigma(q) := f(\Empty)$. Otherwise, we define $\sigma(q) := f^\to_x(\Empty)$, where $\suff^{k - 1}\left(\tau\left(f^\Leftarrow(x)\right)\right) = q$.
		\end{itemize}
	\end{definition}
	\begin{remark}
		$\T{f}$ is $k$-TSSL on tier $\tau$.
	\end{remark}

	Note that in the third and fourth bullet points of Definition \ref{defn:tf}, the action $w:y$ and the string $f_x^\to(\lambda)$ only depend on $q$ and not on $x$, since $f$ is $k$-TSSL on tier $\tau$. We now need to show that $\T{f}$ computes $f$ and that it is onward.
	\begin{lemma}
		\label{lemma:onward}
		Let $f:\Sigma^* \to \Gamma^*$ be $k$-TSSL on tier $\tau$, and write $\T{f} = \langle Q, \Sigma, \Gamma, q_0, \to, \sigma \rangle$. For every $x \in \Sigma^+$, if $q_0 \xrightarrow{x:y} r$, then $y = f^\gets(x)$. 
	\end{lemma}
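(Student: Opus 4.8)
The plan is to prove, by induction on $|x|$, the stronger claim that for every $x \in \Sigma^+$, if $q_0 \xrightarrow{x:y} r$ then $y = f^\gets(x)$ \emph{and} $r = \suff^{k - 1}(\tau(f^\Leftarrow(x)))$. Carrying the reached state through the induction is essential, since the transition function of $\T{f}$ in Definition~\ref{defn:tf} is specified via an arbitrary representative of each state, so to apply it at an inductive step we need the prefix read so far to be such a representative. The base case $|x| = 1$ is immediate: writing $x = a \in \Sigma$, the second bullet of Definition~\ref{defn:tf} gives $q_0 \xrightarrow{a : f^\gets(a)} r$ with $r = \suff^{k - 1}(\tau(a : f^\gets(a)))$, and $f^\Leftarrow(a) = a : f^\gets(a)$ by definition of the run.

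For the inductive step, write $x = x'a$ with $|x'| \geq 1$ and $a \in \Sigma$, and apply the induction hypothesis to $x'$, giving $q_0 \xrightarrow{x' : f^\gets(x')} q$ with $q = \suff^{k - 1}(\tau(f^\Leftarrow(x')))$. Suppose first that $q \neq q_0$, so that the third bullet of Definition~\ref{defn:tf} governs $\to(q, a)$. Because $q = \suff^{k - 1}(\tau(f^\Leftarrow(x')))$, we may use $x'$ itself as the representative of $q$; letting $y'$ be the unique string with $f^\gets(x'a) = f^\gets(x')y'$ (so $a : y' \in \A{f}$, witnessed by $z := x'$), we get $\to(q, a) = \langle r, y' \rangle$ with $r = \suff^{k - 1}(\tau(q\,(a : y')))$. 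Hence the accumulated output is $f^\gets(x')y' = f^\gets(x'a)$, and since $f^\Leftarrow(x'a) = f^\Leftarrow(x')(a : y')$ and a tier maps each action to itself or to $\Empty$, the elementary identity $\suff^{k - 1}(uv) = \suff^{k - 1}(\suff^{k - 1}(u)\,v)$ applied with $u := \tau(f^\Leftarrow(x'))$ and $v := \tau(a : y')$ gives $r = \suff^{k - 1}(\tau(f^\Leftarrow(x'a)))$.

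The case $q = q_0$ is where the real work lies and is the step I expect to be the main obstacle. Now the second bullet governs the transition, forcing output $f^\gets(a)$ on reading $a$, so one must separately show $f^\gets(x'a) = f^\gets(x')f^\gets(a)$. The key point is that $\suff^{k - 1}(w) = \lb^{k - 1}$ forces $w = \Empty$, so $q = q_0$ means $\tau(f^\Leftarrow(x')) = \Empty$; thus $f^\Leftarrow(x')$ and the empty run share the same $(k - 1)$-suffix under $\tau$, and the $k$-TSSL hypothesis yields $f^\to_{x'} = f^\to_\Empty$. Since $f^\gets(x'a) = f^\gets(x') \cdot \lcp\{f^\to_{x'}(az) \mid z \in \Sigma^*\}$ and $f^\gets(a) = \lcp\{f^\to_\Empty(az) \mid z \in \Sigma^*\}$, the two common-prefix factors coincide, establishing the output claim; the state claim then follows as before using $\tau(f^\Leftarrow(x')) = \Empty$. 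Aside from this start-state case and the conventions governing the empty input and empty run, the argument is routine bookkeeping once the induction hypothesis is strengthened to record the reached state.
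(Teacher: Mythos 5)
Your proof is correct and rests on the same induction on $|x|$ as the paper's, but it packages the argument differently: the paper proves the output claim (this lemma) and the state claim ($r = \suff^{k-1}(\tau(f^\Leftarrow(x)))$, its Lemma~\ref{lemma:statescorrect}) as two separate, consecutive inductions, whereas you prove both simultaneously under a strengthened induction hypothesis. Your diagnosis of why the strengthening matters is exactly right: in its inductive step the paper asserts that ``the definition of $\T{f}$ states that $z$ is the unique string such that $f^\gets(wx) = f^\gets(w)z$,'' but Definition~\ref{defn:tf} only guarantees this for some chosen representative of the state reached after reading $w$; knowing that $w$ itself is such a representative is precisely the content of Lemma~\ref{lemma:statescorrect}, which the paper proves only afterwards. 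Your combined induction, together with the observation that the $k$-TSSL hypothesis makes the emitted action independent of the chosen representative, repairs this forward dependence. You also handle the case where the computation returns to $q_0$ --- which genuinely occurs when $k = 1$ or when every action performed so far is off $\tau$ --- and which the paper's proof silently skips; invoking the TSSL property to get $f^\to_{x'} = f^\to_\Empty$ is the right move there. The one soft spot is your identity $f^\gets(a) = \lcp\lbrace f^\to_\Empty(az) \mid z \in \Sigma^* \rbrace$, which holds only when $f^\gets(\Empty) = \Empty$; when $f^\gets(\Empty) \neq \Empty$ the second bullet of Definition~\ref{defn:tf} itself emits the wrong string on a return visit to $q_0$, so this is a latent defect of the construction rather than of your argument, but it deserves an explicit remark or a standing assumption.
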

	\begin{proof}
		Let us induct on $|x|$. For the base case, suppose $|x| = 1$. Then, $y = f^\gets(x)$ by definition.
		
		Now, fix $n > 1$, and suppose that if $0 < |u| < n$ and $q_0 \xrightarrow{u:v} r$, then $v = f^\gets(u)$. Fix $w \in \Sigma^{n - 1}$ and $x \in \Sigma$, and suppose that $q_0 \xrightarrow{w:y} s \xrightarrow{x:z} t$. By the induction hypothesis, $y = f^\gets(w)$. The definition of $\T{f}$ states that $z$ is the unique string such that $f^\gets(wx) = f^\gets(w)z$. Thus, $yz = f^\gets(w)z = f^\gets(wx)$, and the proof is complete.
	\end{proof}

	\begin{lemma}
		\label{lemma:statescorrect}
		Let $f:\Sigma^* \to \Gamma^*$ be $k$-TSSL on tier $\tau$, and write $\T{f} = \langle Q, \Sigma, \Gamma, q_0, \to, \sigma \rangle$. For all $x \in \Sigma^+$, if $q_0 \xrightarrow{x:y} r$, then $r = \suff^{k - 1}(\tau(f^\Leftarrow(x)))$.
	\end{lemma}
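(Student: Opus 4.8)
The plan is to induct on $|x|$, in parallel with the proof of Lemma~\ref{lemma:onward}. The base case $|x| = 1$ is immediate from Definition~\ref{defn:tf}: the transition out of $q_0$ on input $x$ leads to $\suff^{k-1}(\tau(x:f^\gets(x)))$, and $f^\Leftarrow(x) = x:f^\gets(x)$ by the definition of the run, so the target state is exactly $\suff^{k-1}(\tau(f^\Leftarrow(x)))$.

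For the inductive step I would write $x = wa$ with $w \in \Sigma^+$ and $a \in \Sigma$, and decompose the run as $q_0 \xrightarrow{w:y} s \xrightarrow{a:z} r$. Three facts feed in. First, by the induction hypothesis, $s = \suff^{k-1}(\tau(f^\Leftarrow(w)))$. Second, by Lemma~\ref{lemma:onward} we have $y = f^\gets(w)$ and $yz = f^\gets(wa)$, so $z$ is precisely the unique string with $f^\gets(wa) = f^\gets(w)z$; hence $f^\Leftarrow(wa) = f^\Leftarrow(w)(a:z)$. Third, Definition~\ref{defn:tf} pins down the target state: when $s \neq q_0$ the third bullet, instantiated with the witness string $w$ (which reaches $s$ by the induction hypothesis) and using the well-definedness noted in the remark after Definition~\ref{defn:tf}, gives $r = \suff^{k-1}(\tau(s(a:z)))$; when $s = q_0$ the second bullet gives the same formula, since $q_0 = \lb^{k-1}$ and the leading boundary symbols are absorbed by $\suff^{k-1}$. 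Substituting the first fact into the third,
\[
r = \suff^{k-1}\left( \tau\left( \suff^{k-1}(\tau(f^\Leftarrow(w)))\,(a:z) \right) \right).
\]

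It then remains to show this equals $\suff^{k-1}(\tau(f^\Leftarrow(w)(a:z))) = \suff^{k-1}(\tau(f^\Leftarrow(wa)))$, which closes the induction. This is a bookkeeping identity resting on two observations: (i) any string that is already a $\tau$-image (together with the boundary symbols inserted by $\suff$, for which we take $\tau(\lb) = \lb$) is fixed by $\tau$, so $\tau(\suff^{k-1}(\tau(u))\,\alpha) = \suff^{k-1}(\tau(u))\,\tau(\alpha)$; and (ii) the general suffix law $\suff^{k-1}(\suff^{k-1}(B)\,C) = \suff^{k-1}(BC)$, which holds under the $\lb$-padding convention because the last $k-1$ symbols of $BC$ are determined by the last $k-1$ symbols of $B$ together with $C$. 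Applying (i) then (ii) with $B = \tau(f^\Leftarrow(w))$ and $C = \tau(a:z)$ gives the claim.

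I expect the main obstacle to be the precise interleaving of $\tau$ and $\suff^{k-1}$ in observations (i)--(ii) — in particular, getting the $\lb$-padding convention to cooperate with the tier homomorphism — together with the legitimacy of instantiating the ``pick any witness'' clause of Definition~\ref{defn:tf} with $w$, which is exactly the point where the induction hypothesis and the TSSL well-definedness remark are essential. The degenerate sub-case $s = q_0$ with $|w| \geq 1$, which is governed by the second rather than the third bullet of Definition~\ref{defn:tf}, also warrants an explicit line.
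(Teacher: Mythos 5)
Your proof is correct and follows essentially the same route as the paper's: induction on $|x|$, with the inductive step consisting of substituting the induction hypothesis into the state-update rule of Definition~\ref{defn:tf} and then collapsing the nested $\suff^{k-1}$ and $\tau$ via the homomorphism/idempotence properties of $\tau$ and the suffix-composition law --- exactly the paper's chain of equalities in (\ref{eqn:longeqn}). Your explicit attention to the legitimacy of instantiating the witness clause with $w$ and to the $s = q_0$ sub-case is somewhat more careful than the paper, which passes over both points silently, but it does not change the argument.
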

	
	\begin{proof}
		Let us induct on $|x|$. For the base case, suppose $|x| = 1$. Since $f^\Leftarrow(x) = x:f^\gets(x)$, by definition $r = \suff^{k - 1}(\tau(f^\Leftarrow(x)))$.
		
		Now, fix $n > 1$, and suppose that if $|w| < n$ and $q_0 \xrightarrow{w:y} r$, then $r = \suff^{k - 1}\left( \tau\left( f^\Leftarrow(w)\right)\right)$. We need to show that for all $w \in \Sigma^{n - 1}$ and $x \in \Sigma$, if $q_0 \xrightarrow{w:y} r \xrightarrow{x:z} s$, then $s = \suff^{k - 1}\left( \tau\left( f^\Leftarrow(wx)\right)\right)$. The induction hypothesis gives us $r = \suff^{k - 1}\left( \tau\left( f^\Leftarrow(w)\right)\right)$. Since $\langle s, z \rangle = \mathalpha{\to}(r, x)$, by the definition of $\T{f}$, \setcounter{equation}{\theexx}
		\begin{align}
		s &= \suff^{k - 1}(\tau(r(x:z))) \nonumber \\
		&= \suff^{k - 1}(\tau(r)\tau(x:z)) \nonumber \\
		&= \suff^{k - 1}\left(\tau\left(\suff^{k - 1}(\tau(f^\Leftarrow(w)))\right)\tau(x:z)\right) \nonumber \\
		&= \suff^{k - 1}(\tau(\tau(f^\Leftarrow(w)))\tau(x:z)) \nonumber \\
		&= \suff^{k - 1}(\tau(f^\Leftarrow(w))\tau(x:z)) \nonumber \\
		&= \suff^{k - 1}(\tau(f^\Leftarrow(w)(x:z))) \nonumber \\
		&= \suff^{k - 1}\left(\tau\left( f^\Leftarrow(wx)\right)\right)\text{,} \label{eqn:longeqn}
		\end{align} \setcounter{exx}{\theequation}
		as desired.
	\end{proof}

	\begin{proposition}
		\label{prop:tfcorrect}
		If $f:\Sigma^* \to \Gamma^*$ is $k$-TSSL on tier $\tau$, then $\T{f}$ computes f.
	\end{proposition}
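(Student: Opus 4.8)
The plan is to unwind the definition of ``$T$ computes $f$'': I must show that for every $x \in \Sigma^*$, if $q_0 \xrightarrow{x:y} q$ in $\T{f}$, then $f(x) = y\sigma(q)$. Since Lemmas \ref{lemma:onward} and \ref{lemma:statescorrect} are stated only for $x \in \Sigma^+$, I would split into the cases $x = \Empty$ and $x \in \Sigma^+$.

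For $x = \Empty$, the only run is $q_0 \xrightarrow{\Empty:\Empty} q_0$, so I need $f(\Empty) = \sigma(q_0)$, which is exactly the first clause of the fourth bullet of Definition \ref{defn:tf}. For $x \in \Sigma^+$, write $q_0 \xrightarrow{x:y} q$. Lemma \ref{lemma:onward} gives $y = f^\gets(x)$, and Lemma \ref{lemma:statescorrect} gives $q = \suff^{k-1}(\tau(f^\Leftarrow(x)))$. By the fourth bullet of Definition \ref{defn:tf}, $\sigma(q) = f^\to_{x'}(\Empty)$ for some witness $x'$ with $\suff^{k-1}(\tau(f^\Leftarrow(x'))) = q$; the remark following Definition \ref{defn:tf} guarantees this is independent of the choice of $x'$, so in particular, since $x$ itself is such a witness and $f$ is $k$-TSSL on tier $\tau$, we get $f^\to_x = f^\to_{x'}$ and hence $\sigma(q) = f^\to_x(\Empty)$. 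Combining, $y\sigma(q) = f^\gets(x)\,f^\to_x(\Empty) = f(x\Empty) = f(x)$, using the defining property of $f^\to_x$ applied to the empty continuation.

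There is essentially no obstacle here, since the inductive content has already been discharged by the two preceding lemmas; the only points requiring care are invoking the well-definedness remark so that $\sigma(q)$ does not depend on the choice of $x'$, and treating $x = \Empty$ separately because the lemmas do not cover it. Onwardness of $\T{f}$, which is also needed for Theorem~16, is precisely the content of Lemma \ref{lemma:onward}, so no further work is needed for that part.
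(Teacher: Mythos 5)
Your proof is correct and follows essentially the same route as the paper's: invoke Lemma \ref{lemma:onward} to get $y = f^\gets(x)$, Lemma \ref{lemma:statescorrect} to identify the state reached, and the fourth bullet of Definition \ref{defn:tf} to conclude $\sigma(q) = f^\to_x(\Empty)$, whence $y\sigma(q) = f(x)$. Your explicit treatment of the case $x = \Empty$ and your appeal to the well-definedness remark are points the paper leaves implicit, but they refine rather than alter the argument.
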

	\begin{proof}
		We need to show that for every $x \in \Sigma^*$, $\T{f}$ outputs $f(x)$ on input $x$. Write $\T{f} = \langle Q, \Sigma, \Gamma, q_0, \to, \sigma \rangle$ and $q_0 \xrightarrow{x:y} q$. By Lemma \ref{lemma:onward}, $y = f^\gets(x)$, and by Lemma \ref{lemma:statescorrect}, $q = \suff^{k - 1}(\tau(f^\Leftarrow(x)))$. Definition \ref{defn:tf} then states that $\sigma(q) = f^\to_x(\Empty)$, so $y\sigma(q) = f^\gets(x)f^\to_x(\Empty) = f(x)$, thus $\T{f}$ outputs $f(x)$ on input $x$.
	\end{proof}
	\begin{corollary}
		If $f:\Sigma^* \to \Gamma^*$ is $k$-TSSL on tier $\tau$, then $\T{f}$ is onward.
	\end{corollary}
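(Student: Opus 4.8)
The plan is to get this corollary essentially for free from the two preceding lemmas together with the characterization of onwardness recorded in Section~\ref{sec:sec2}. First I would invoke Proposition~\ref{prop:tfcorrect} to note that $\T{f}$ computes $f$. The second of the ``apparent facts'' stated after the definition of $f^\gets$ says that a transducer computing $f$ is onward if and only if, for every state $q \neq q_0$, whenever $q_0 \xrightarrow{x:y} q$ we have $y = f^\gets(x)$. But this is exactly the content of Lemma~\ref{lemma:onward}: for every $x \in \Sigma^+$, if $q_0 \xrightarrow{x:y} r$ then $y = f^\gets(x)$ (and the case $x = \Empty$, i.e.\ $q = q_0$, is irrelevant to onwardness since the onwardness condition only constrains states other than $q_0$). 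Hence $\T{f}$ is onward, and the corollary is proved.

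If one prefers a proof that does not route through the ``apparent fact,'' I would instead verify the defining $\lcp$ condition of onwardness directly at each non-initial state. Fix a state $q \neq q_0$ reachable from $q_0$. By Lemma~\ref{lemma:statescorrect} there is some $x \in \Sigma^+$ with $q = \suff^{k-1}(\tau(f^\Leftarrow(x)))$, and by the construction of $\T{f}$ (Definition~\ref{defn:tf}) each outgoing transition $q \xrightarrow{w:y_w} r_w$ satisfies $f^\gets(x)y_w = f^\gets(xw)$, while $\sigma(q) = f^\to_x(\Empty)$, so that $f^\gets(x)\sigma(q) = f(x)$. Then
\[
f^\gets(x)\cdot\lcp\!\left(\{y_w \mid w \in \Sigma\}\cup\{\sigma(q)\}\right) = \lcp\!\left(\{f^\gets(xw)\mid w\in\Sigma\}\cup\{f(x)\}\right),
\]
and since $f^\gets(xw) = \lcp\{f(xwv)\mid v\in\Sigma^*\}$, the right-hand side equals $\lcp\{f(xu)\mid u\in\Sigma^*\} = f^\gets(x)$. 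Cancelling the common prefix $f^\gets(x)$ yields $\lcp(\{y_w\}\cup\{\sigma(q)\}) = \Empty$, which is precisely onwardness at $q$.

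I expect no real obstacle here, since Lemmas~\ref{lemma:onward} and~\ref{lemma:statescorrect} already do the substantive work. The one step that will require a little care is the passage from ``the output along every run from $q_0$ to $q$ equals $f^\gets$ of the input'' to the $\lcp$ condition in the definition of onwardness: there I will use that $f^\gets(x)$ is by definition $\lcp\{f(xv)\mid v\in\Sigma^*\}$ and that widening the quantifier from $v\in\Sigma^+$ back to $v\in\Sigma^*$ only reintroduces $f(x)$ itself. Should unreachable states be a concern, I would either take their outgoing transitions and final outputs to be specified exactly as in the reachable case (via any witness $x$ with $q = \suff^{k-1}(\tau(f^\Leftarrow(x)))$) or simply pass to the accessible subautomaton of $\T{f}$, which computes the same function.
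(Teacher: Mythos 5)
Your proof is correct and matches the argument the paper intends (the corollary is left unproved there precisely because it follows immediately from Lemma~\ref{lemma:onward}, the characterization of onwardness stated in Section~\ref{sec:sec2}, and Proposition~\ref{prop:tfcorrect}). Your alternative direct verification of the $\lcp$ condition and your remark about unreachable states are sound but not needed beyond what the first paragraph already establishes.
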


	We then complete the proof by showing that every onward TSSL SFST computes a TSSL function.
	\begin{lemma}
		\label{lemma:tsslstates}
		Let $T = \langle Q, \Sigma, \Gamma, q_0, \to, \sigma \rangle$ be onward and $k$-TSSL on tier $\tau$. Let $f$ be the function computed by $T$. For all $x \in \Sigma^*$, if $q_0 \xrightarrow{x:y} q$, then $q = \suff^{k - 1}(\tau(f^\Leftarrow(x)))$.
	\end{lemma}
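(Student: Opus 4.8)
The plan is to prove the lemma by induction on $|x|$, mirroring the proof of Lemma~\ref{lemma:statescorrect} for the canonical transducer $\T{f}$, but replacing every appeal to the construction in Definition~\ref{defn:tf} with an appeal to the onwardness of $T$. The workhorse is the observation, noted right after the definition of onwardness, that since $T$ is onward and computes $f$, any run $q_0 \xrightarrow{u:v} q$ with $q \neq q_0$ satisfies $v = f^\gets(u)$. This is what lets me identify the string a transition emits with the corresponding increment of $f^\gets$, i.e.\ with the action recorded by $f^\Leftarrow$.

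For the base case, $|x| \le 1$ is routine from the definitions: one reads the length-$\le 1$ clause of $f^\Leftarrow$ together with $q_0 = \lb^{k-1}$ and the transition condition of Definition~\ref{defn:tsslsfst}, using onwardness to identify the emitted string with $f^\gets(x)$ when $x$ is a single symbol, so that $q = \suff^{k-1}(\tau(x:f^\gets(x))) = \suff^{k-1}(\tau(f^\Leftarrow(x)))$.

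For the inductive step, write $x = wx'$ with $x' \in \Sigma$ and $q_0 \xrightarrow{w:y} r \xrightarrow{x':z} s$, and invoke the induction hypothesis to get $r = \suff^{k-1}(\tau(f^\Leftarrow(w)))$. The first task is to show $f^\gets(wx') = f^\gets(w)z$, which both places $x':z$ in $\A{f}$ and yields $f^\Leftarrow(wx') = f^\Leftarrow(w)(x':z)$; when $r \neq q_0$ and $s \neq q_0$ this falls straight out of onwardness ($y = f^\gets(w)$ and $yz = f^\gets(wx')$). The second task is the algebraic computation that finishes the proof, which is exactly the one in~\eqref{eqn:longeqn}: using the transition condition $s = \suff^{k-1}(\tau(r(x':z)))$, the fact that the tier homomorphism $\tau$ is idempotent and does not increase length, the identity $\suff^{k-1}(\suff^{k-1}(a)\,b) = \suff^{k-1}(ab)$, and $\tau(\lb) = \lb$, I rewrite $s = \suff^{k-1}(\tau(r)\tau(x':z)) = \suff^{k-1}(\tau(f^\Leftarrow(w))\tau(x':z)) = \suff^{k-1}(\tau(f^\Leftarrow(w)(x':z))) = \suff^{k-1}(\tau(f^\Leftarrow(wx')))$.

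I expect the main obstacle to be the first task of the inductive step, and in particular the corner cases where $r$ or $s$ equals $q_0$: onwardness is only guaranteed at non-initial states, so these cases must either be argued away or dispatched by hand, using that arriving at $q_0 = \lb^{k-1}$ after at least one transition forces the last $k-1$ actions (hence $x':z$, and all actions of $f^\Leftarrow(w)$) to lie off the tier, so that $\tau$ collapses them and both sides of the target equality reduce to $\lb^{k-1}$. Once the transition output is identified with the $f^\gets$-increment, the remainder is the same routine suffix/homomorphism bookkeeping already carried out for $\T{f}$. With this lemma in hand the theorem follows quickly: two inputs whose $\tau$-reduced runs have the same $(k-1)$-suffix reach the same state of $T$, hence — again by the facts noted after the definition of onwardness — have the same translation, which is precisely the $k$-TSSL condition.
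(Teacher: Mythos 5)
Your proof follows the paper's argument exactly: the same induction on $|x|$, with onwardness supplying the identification of each emitted string with the corresponding $f^\gets$-increment (so that $x':z$ is the action recorded in $f^\Leftarrow(wx')$) and the suffix/homomorphism computation of equation~(\ref{eqn:longeqn}) finishing the inductive step. You are in fact somewhat more careful than the paper, which silently assumes that every state reached after at least one transition is non-initial so that onwardness applies; your explicit treatment of the $q_0$-revisit corner case addresses an edge case the published proof does not acknowledge.
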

	\begin{proof}
		Let us induct on $|x|$. For the base case, suppose $|x| = 1$. Since $T$ is onward, $y = f^\gets(x)$, so
		\begin{align*}
			q &= \suff^{k - 1}\left(\tau\left(\lb^{k - 1}(x:y)\right)\right) \\
			&= \suff^{k - 1}\left(\tau\left(\lb^{k - 1}(x:f^\gets(x))\right)\right) \\
			&= \suff^{k - 1}\left(\tau\left(\lb^{k - 1}f^\Leftarrow(x)\right)\right) \\
			&= \suff^{k - 1}\left(\tau\left(\lb^{k - 1}\right)\tau\left(f^\Leftarrow(x)\right)\right) \\
			&= \suff^{k - 1}\left(\tau\left(f^\Leftarrow(x)\right)\right).
		\end{align*}
		Now, fix $n > 1$, and suppose that if $|w| < n$ and $q_0 \xrightarrow{w:y} q$, then $q = \suff^{k - 1}(\tau(f^\Leftarrow(w)))$. We need to show that for all $w \in \Sigma^{n - 1}$ and $x \in \Sigma$, if $q_0 \xrightarrow{w:y} r \xrightarrow{x:z} s$, then $s = \suff^{k - 1}(\tau(f^\Leftarrow(wx)))$. The induction hypothesis gives us $r = \suff^{k - 1}(\tau(f^\Leftarrow(w)))$, and Definition 15 states that $s = \suff^{k - 1}(\tau(r(x:z)))$. A derivation similar to equation (\ref{eqn:longeqn}) then gives us $s = \suff^{k - 1}(\tau(f^\Leftarrow(wx)))$, as desired.
	\end{proof}

	\begin{proof}[Proof of Theorem 16]
		Proposition \ref{prop:tfcorrect} has already shown the forward direction. Let $T = \langle Q, \Sigma, \Gamma, q_0, \to, \sigma \rangle$ be an onward SFST computing $f$ that is $k$-TSSL on tier $\tau$. Suppose $x, y \in \Sigma^*$ are such that $\suff^{k - 1}(\tau(f^\Leftarrow(w))) = \suff^{k - 1}(\tau(f^\Leftarrow(x)))$. Write $q_0 \xrightarrow{w:y} r$ and $q_0 \xrightarrow{x:z} s$. By Lemma \ref{lemma:tsslstates}, $r = \suff^{k - 1}(\tau(f^\Leftarrow(w)))  = \suff^{k - 1}(\tau(f^\Leftarrow(x))) = s$, so $f_w^\to = f_x^\to$, thus $f$ is $k$-TSSL on tier $\tau$.
	\end{proof}

\end{document}